\newtheorem{assumption}{Assumption}
\newtheorem{lemma}{Lemma}
\newtheorem{theorem}{Theorem}
\newtheorem{proposition}{Proposition}
\newtheorem{definition}{Definition}
\newtheorem{corollary}{Corollary}
\newcommand{\bbE}{\mathbb{E}}
\newcommand{\var}{\mathrm{Var}}
\newcommand{\softmax}{\mathrm{Softmax}}
\newcommand{\gelu}{\mathrm{GELU}}
\newcommand{\relu}{\mathrm{ReLU}}
\newcommand{\dint}{\mathrm{d}}
\newcommand{\dboijn}{\Delta \beta_{1ij}^{n}}
\newcommand{\dbzijn}{\Delta \beta_{0ij}^{n}}
\newcommand{\deijn}{\Delta \eta_{ij}^{n}}
\newcommand{\dbboin}{\Delta \bar{\beta}_{1i}^{n}}
\newcommand{\dbbzin}{\Delta \bar{\beta}_{0i}^{n}}
\newcommand{\dbein}{\Delta \bar{\eta}_{i}^{n}}
\newcommand{\boin}{\beta_{1i}^n}
\newcommand{\bzin}{\beta_{0i}^n}
\newcommand{\ein}{\eta_i^n}
\newcommand{\boj}{\beta_{1j}^*}
\newcommand{\bzj}{\beta_{0j}^*}
\newcommand{\ej}{\eta_j^*}
\newcommand{\bboi}{\bar{\beta}_{1i}}
\newcommand{\bbzi}{\bar{\beta}_{0i}}
\newcommand{\bei}{\bar{\eta_i}}
\newcommand{\boi}{\beta_{1i}^*}
\newcommand{\bzi}{\beta_{0i}^*}
\newcommand{\zerod}{{0}_d}
\newcommand{\zeroq}{\mathbf{0}_q}
\newcommand{\kbar}{\bar{k}}
\newcommand{\normf}[1]{\|#1\|_{L^2(\mu)}}
\DeclareMathOperator*{\argmin}{arg\,min}
\title{Sigmoid Gating is More Sample Efficient than \\ Softmax Gating in Mixture of Experts}
\author{%
  Huy Nguyen \quad  Nhat Ho$^{\star}$ \quad Alessandro Rinaldo$^{\star}$ \\
  Department of Statistics and Data Sciences, The University of Texas at Austin$^{\dagger}$\\
  \texttt{\{huynm, minhnhat\}@utexas.edu}, \texttt{alessandro.rinaldo@austin.utexas.edu}
}
\begin{document}

\maketitle

\begin{abstract}
 The softmax gating function is arguably the most popular choice in mixture of experts modeling. Despite its widespread use in practice, the softmax gating may lead to unnecessary competition among experts, potentially causing the undesirable phenomenon of representation collapse due to its inherent structure. In response, the sigmoid gating function has been recently proposed as an alternative and has been demonstrated empirically to achieve superior performance. However, a rigorous examination of the sigmoid gating function is lacking in current literature. In this paper, we verify theoretically that the sigmoid gating, in fact, enjoys a higher sample efficiency than the softmax gating for the statistical task of expert estimation.
Towards that goal, we consider a regression framework in which the unknown regression function is modeled as a mixture of experts, and study the rates of convergence of the least squares estimator under the over-specified case in which the number of fitted experts is larger than the true value. We show that two gating regimes naturally arise and, in each of them, we formulate an identifiability condition for the expert functions and derive the corresponding convergence rates. In both cases, we find that experts formulated as feed-forward networks with commonly used activation such as $\mathrm{ReLU}$ and $\mathrm{GELU}$ enjoy faster convergence rates under the sigmoid gating than those under softmax gating. Furthermore, given the same choice of experts, we demonstrate that the sigmoid gating function requires a smaller sample size than its softmax counterpart to attain the same error of expert estimation and, therefore, is more sample efficient.  
\end{abstract}
{\let\thefootnote\relax\footnotetext{$^{\star}$Co-last authors.
}}

\section{Introduction}
\label{sec:introduction}
Mixture of experts (MoE) \cite{Jacob_Jordan-1991,Jordan-1994} has recently emerged as a powerful machine learning model that helps scale up the model capacity while requiring a nearly constant computational overhead \cite{shazeer2017topk,fedus2021switch}. In particular, it aggregates multiple sub-models called experts based on a gating network. Here, experts can be formulated as neural networks, and they specialize in different aspects of the data. For instance, in large language models \cite{jiang2024mixtral,puigcerver2024sparse,lepikhin_gshard_2021,zhou2023brainformers,Du_Glam_MoE,dai2024deepseekmoe,pham2024competesmoe}, one expert might focus on syntax, another on semantics, whereas the others concentrate on context. Meanwhile, the gating network takes the input data and calculates a set of weights that determine the contribution of each expert. This gating mechanism guarantees that the most relevant experts are assigned more weight for a given input. In addition to language modeling, MoE has also been leveraged in several other applications, including computer vision \cite{Riquelme2021scalingvision,liang_m3vit_2022,Ruiz_Vision_MoE}, multi-task learning \cite{hazimeh_dselect_k_2021,gupta2022sparsely}, multi-modal learning \cite{han2024fusemoe}, speech recognition \cite{You_Speech_MoE,gulati_conformer_2020,peng_bayesian_1996} and reinforcement learning \cite{chow_mixture_expert_2023,ceron2024rl}. 

In the above applications of MoE, practitioners frequently utilize the softmax gating function to compute the mixture weights. However, the use of softmax function might introduce an unexpected competition among experts, which leads to the representation collapse issue \cite{chi_representation_2022,pham2024competesmoe,lee_thorp_sparse_2022,nguyen2024cosine}: when the weight of one
expert increases, those of the others decrease accordingly. As a consequence, some experts will dominate the decision-making process, and overshadow the contributions of others. Therefore, the ability of MoE models to incorporate the diversity of expert specialization is partially limited. To this end, Csordás et al. \cite{csordas2023approximating} propose using the sigmoid function in place of the softmax to remove the unnecessary expert competition, and empirically demonstrate that it is indeed a better choice of gating function. On the other hand, the theoretical guarantee for that claim has remained missing in the literature.
The main objective of this work is to provide a comprehensive analysis of the sigmoid gating function from the perspective of the expert estimation problem.  In particular, we will show that sigmoid gating delivers superior sample efficiency for estimating the model parameters and allows for more general expert functions than softmax gating. 

\textbf{Related works.} A very recent line of research has focused on analyzing the convergence rates for expert estimation in Gaussian MoE models under a variety of assumptions and choices of gating functions. Assuming that data from an input-free gating Gaussian MoE, Ho et al. \cite{ho2022gaussian} demonstrated that the expert estimation rates for the maximum likelihood estimator vary with the algebraic independence among expert functions. Under the same setting but using instead softmax gating, Nguyen et al. \cite{nguyen2023demystifying} discovered that the expert estimation rates depend on the solvability of a system of polynomial equations resulting from an intrinsic interaction among gating and expert parameters. Subsequently, a Gaussian MoE with Top-K sparse softmax gate, designed to activate  only a fraction of experts per input, was studied in \cite{nguyen2024statistical}. The convergence analysis in that work revealed that activating exactly one expert for each input would eliminate the previous parameter interaction, thus boosting the expert estimation rates. 
Despite many important insights provided in this line of works, the mixture setting of Gaussian MoE is still far from practical. To that effect, Nguyen et al. \cite{nguyen2024squares} introduced a more realistic regression framework in which, conditionally on the features, the response variables are sampled from noisy realization of an unknown and deterministic softmax gating MoE-type regression function. They found that experts formulated as feed-forward networks with sigmoid or hyperbolic tangent activation functions enjoy faster estimation rates than those equipped with a polynomial activation. In this paper, we adopt the same regression setting and carry out an analogous analysis by considering instead sigmoid gating for the underlying regression function, as opposed to the more popular softmax gating considered in \cite{nguyen2024squares}. As we will see, the choice of gating function turns out to be critical, requiring a different type of analysis and leading to markedly different conclusions. 

{\bf Problem setup.} We assume that the data $(X_{1}, Y_{1}), (X_{2}, Y_{2}), \ldots, (X_{n}, Y_{n})\in\mathbb{R}^d\times\mathbb{R}$ follow a standard regression model
\begin{align}
    Y_{i} = f_{G_{*}}(X_{i}) + \epsilon_i, \quad i=1,\ldots,n, \label{eq:moe_regression_model}
\end{align}
where the features $X_{1}, X_{2}, \ldots, X_{n}$ are i.i.d. samples from a probability distribution $\mu$ on $\mathbb{R}^d$ and $\epsilon_{1}, \ldots, \epsilon_{n}$ are independent noise variables such that $\bbE[{\epsilon_i}|X_i] = 0$ and $\var(\epsilon_i|X_i) = \nu$ for all $1 \leq i \leq n$; for simplicity we further assume that they follow a Gaussian distribution.
The unknown regression function $f_{G_{*}}$ is formulated as a sigmoid-gated MoE with $k_*$ experts, i.e.,
\begin{align}
    \label{eq:standard_MoE}
   x \in \mathbb{R}^d \mapsto f_{G_{*}}(x) := \sum_{i=1}^{k_*} \frac{1}{1+\exp(-(\beta^*_{1i})^{\top}x-\beta^*_{0i})}\cdot h(x,\eta^*_i),
\end{align}
where the {\it expert function} $x \mapsto h(x,\eta)$ is of parametric form,  specified by  parameter $\eta \in \mathbb{R}^q$. The regression function $f_{G_{*}}$ is fully characterized by the unknown parameters $(\beta^*_{0i},\beta^*_{1i},\eta^*_{i})_{i=1}^{k_*}$ in $\mathbb{R}\times\mathbb{R}^d\times\mathbb{R}^q$, which can be compactly encoded using the associated \emph{mixing measure} $G_{*} := \sum_{i = 1}^{k_{*}} \frac{1}{1+\exp(-\beta_{0i}^{*})} \delta_{(\beta_{1i}^{*},\eta^*_i)}$, a weighted sum of Dirac measures $\delta$. 

\textbf{Least squares estimation.} 
To estimate the unknown parameters $(\beta^*_{0i},\beta^*_{1i},\eta^*_{i})_{i=1}^{k_*}$ (equivalently, the ground truth mixing measure $G^*$), we deploy the least squares method \cite{vandeGeer-00} and focus on the estimator
\begin{align}
    \label{eq:least_squared_estimator}
    \widehat{G}_n:=\argmin_{G\in\mathcal{G}_{k}(\Theta)}\sum_{i=1}^{n}\Big(Y_i-f_{G}(X_i)\Big)^2,
\end{align}
where $\mathcal{G}_{k}(\Theta):=\{G=\sum_{i=1}^{k'}\frac{1}{1+\exp(-\beta_{0i})}\delta_{(\beta_{1i},\eta_{i})}:1\leq k'\leq k, \  (\beta_{0i},\beta_{1i},\eta_{i})\in\Theta\}$ is the set of all mixing measures with at most $k$ atoms. Since the true number of true experts $k_*$ is typically unknown in practice, we will assume that the number $k$ of fitted experts is at least as large, i.e. that $k>k_*$.

\textbf{Technical challenges.} 
The main challenge in analyzing MoE models with sigmoid gating lies in the convergence behavior of the mixture weights. Specficially, since we fit the ground-truth MoE model~\eqref{eq:standard_MoE} with a mixture of $k>k_*$, there must be some true atoms $(\beta^*_{1i},\eta^*_i)$ fitted by more than one component; we will refer to the corresponding parameters $\beta^*_{1i}$ as {\it over-specified parameters}. To illustrate, suppose that $(\hat{\beta}^n_{1i},\hat{\eta}^n_{i})\to(\beta^*_{11},\eta^*_{1})$ for $i\in\{1,2\}$, in probability. Then, to ensure  convergence of $f_{\widehat{G}_n}$ to $f_{G_*}$ in the $L^{2}(\mu)$-norm we must have that, for $i=1,2$,
\begin{align*}
    \sum_{i=1}^{2}\frac{1}{1+\exp(-(\hat{\beta}^n_{1i})^{\top}x-\hat{\beta}^n_{0i})}\to\frac{1}{1+\exp(-(\beta^*_{11})^{\top}x-\beta^*_{01})},
\end{align*}
as $n\to\infty$, for $\mu$-almost every $x$. Note that the above limit can be achieved only if $\beta^*_{11}= 0_{d}$.  As a result, we will consider the two following regimes for the over-specified parameters $\beta^*_{1i}$:

\textbf{Regime 1.} All the over-specified parameters $\beta^*_{1i}$ are equal to $0_{d}$; 

\textbf{Regime 2.} At least one among the over-specified parameters $\beta^*_{1i}$ is different from $0_{d}$.

It is worth emphasizing that the second regime presents additional technical challenges, as the least squares estimator $\widehat{G}_n$ converges to a mixing measure $\overline{G}\in\argmin_{G\in\mathcal{G}_k(\Theta)\setminus\mathcal{G}_{k_*}(\Theta)}\normf{f_{G}-f_{G_*}}$ that is in general different that the true mixing measure $G_*$, as in the first regime.

\textbf{Contributions.} In this paper, we carry out a convergence analysis of the sigmoid gating MoE under two regimes of the gating parameters. Our main objective is to compare the sample efficiency between the sigmoid gating and the softmax gating. The contributions of our paper are three-fold, and can be summarized as follows:

\textbf{(C.1) Convergence rate for the regression function.} We demonstrate in Theorem~\ref{theorem:over_regression} that the regression estimation $f_{\widehat{G}_n}$ converges to its true counterpart $f_{G_*}$ at the rate of order $\mathcal{O}_P(n^{-1/2})$, which is parametric on the sample size $n$. This regression estimation rate is then utilized for determining the expert estimation rates.

\textbf{(C.2) Expert estimation rates under the Regime 1.} Under the first regime, we first establish a condition called \emph{strong identifiability} to characterize which types of experts would yield polynomial estimation rates. In particular, we find out that the rates for estimating experts formulated as feed-forward networks with popular activation functions such as $\relu$ and $\gelu$ are of polynomial orders. By contrast, those for polynomial experts and input-indepedent experts are slower than any polynomial rates and could be of order $\mathcal{O}_P(1/\log(n)$. Such expert convergence behavior is similar to that when using the softmax gating.

\textbf{(C.3) Expert estimation rates under the Regime 2.} Under the second regime, the regression estimation $f_{\widehat{G}_n}$ converge to a function taking the form of a sigmoid gating MoE which is different from $f_{G_*}$. From our derived \emph{weak identifiability} condition, it follows that estimation rates for feed-forward expert networks with $\relu$ or $\gelu$ activation and polynomial experts are of orders $O_P(n^{-1/2})$, which are substantially faster than those when using the softmax gating (see also Table~\ref{table:expert_rates}). Therefore, it follows that the sigmoid gating is more sample efficient than the softmax gating.

\begin{table*}[t!]
\caption{Summary of expert estimation rates (up to a logarithmic factor) under the MoE models equipped with the sigmoid gating (ours) and the softmax gating \cite{nguyen2024squares}. In this work, we consider three types of expert functions including experts network with $\relu$, $\gelu$ activations; polynomial experts; and input-independent experts.}
\textbf{}\\
\centering
\begin{tabular}{| c | c | m{5em} | c | c | c|} 
\hline
\multirow{2}{4em}{}& \multicolumn{2}{c|}{\textbf{ReLU, GELU Experts}} &\multicolumn{2}{c|}{\textbf{Polynomial Experts}} & \multirow{2}{4em}{\textbf{Input-independent Experts}}\\ \cline{2-5}
 &Regime 1& Regime 2 & Regime 1 & Regime 2 & \\
\hline
{\textbf{Sigmoid}}  & ${\mathcal{O}_{P}}(n^{-1/4})$ & ${\mathcal{O}_{P}}(n^{-1/2})$ & ${\mathcal{O}_{P}}(1/\log(n))$ & ${\mathcal{O}_{P}}(n^{-1/2})$& ${\mathcal{O}_{P}}(1/\log(n))$ \\
\hline
{\textbf{Softmax}} & \multicolumn{2}{c|}{${\mathcal{O}_{P}}(n^{-1/4})$} & \multicolumn{2}{c|}{${\mathcal{O}_{P}}(1/\log(n))$} & ${\mathcal{O}_{P}}(1/\log(n))$  \\
\hline
\end{tabular}
\label{table:expert_rates}
\end{table*}

\textbf{Outline.} The rest of the paper is organized as follows. In Section~\ref{sec:regresstion_estimation}, we introduce some mild assumptions on the parameters, and then determine estimation rates for the regression function. Next, we establish convergence rates for expert estimation under both Regime 1 and the Regime 2 in Section~\ref{sec:over_specified}. Then, we conduct some experiments to empirically justify our theory in Section~\ref{sec:experiments}. Finally, we provide a discussion on the practical implications, limitations and future directions of our work in Section~\ref{sec:conclusion}. Full proofs of the results and additional experiments are deferred to the Appendices.

\textbf{Notation.} For any $n\in\mathbb{N}$, we denote $[n]$ as the set $=\{1,2,\ldots,n\}$. Additionally, for any set $S$, we refer to $|S|$ as its cardinality. Next, for any vectors $v:=(v_1,v_2,\ldots,v_d) \in \mathbb{R}^{d}$ and $\alpha:=(\alpha_1,\alpha_2,\ldots,\alpha_d)\in\mathbb{N}^d$, we let $v^{\alpha}=v_{1}^{\alpha_{1}}v_{2}^{\alpha_{2}}\ldots v_{d}^{\alpha_{d}}$, $|v|:=v_1+v_2+\ldots+v_d$ and $\alpha!:=\alpha_{1}!\alpha_{2}!\ldots \alpha_{d}!$, while $\|v\|$ stands for its $2$-norm value. Lastly, for any two positive sequences $(a_n)_{n\geq 1}$ and $(b_n)_{n\geq 1}$, we write $a_n = \mathcal{O}(b_n)$ or $a_{n} \lesssim b_{n}$ if $a_n \leq C b_n$ for all $ n\in\mathbb{N}$, where $C > 0$ is some universal constant. The notation $a_{n} = \mathcal{O}_{P}(b_{n})$ indicates that $a_{n}/b_{n}$ is stochastically bounded. 

\section{Preliminaries}
\label{sec:regresstion_estimation}
In this section, we demonstrate the parametric convergence rate for the least squares estimator of the regression function $f_{\widehat{G}_n}$. We begin by describing the assumption that will be used throughout the paper, which are overall very mild.


\textbf{(A.1) Topological assumptions.} To ensure the convergence of the least squares estimators, we assume the parameter space $\Theta\subseteq\mathbb{R}\times\mathbb{R}^d\times\mathbb{R}^q$ is compact, and the input space $\mathcal{X}\subseteq\mathbb{R}^d$ is bounded.

\textbf{(A.2) Lipschitz experts.} The expert function $h(\cdot,\eta)$ is assumed to be Lipschitz continuous with respect to $\eta$ and bounded,  $\mu$-almost surely. 

\textbf{(A.3) Distinct experts.} The ground-truth expert parameters $\eta^*_{1},\ldots,\eta^*_{k_*}$ are pair-wise different so that the experts $h(x,\eta^*_{1}),\ldots,h(x,\eta^*_{k_*})$ are also distinct from each other. 

Given the above assumptions, we are now ready to present the main result of this section. Recall that we divide our analysis into two following regimes:
\begin{itemize}
    \item \textbf{Regime 1.} All the over-specified parameters $\beta^*_{1i}$ are equal to $\zerod$;
    \item \textbf{Regime 2.} At least one among the over-specified parameters $\beta^*_{1i}$ is different from $\zerod$.
\end{itemize}
Let us begin with Regime 1. We show in Theorem~\ref{theorem:over_regression} that the regression estimator $f_{\widehat{G}_n}$ converges to the true regression function $f_{G_*}$ at the parametric rate. 

\begin{theorem}
    \label{theorem:over_regression}
     Under the Regime 1 and with the least squares estimator $\widehat{G}_n$ defined in equation~\eqref{eq:least_squared_estimator}, the regression estimator $f_{\widehat{G}_n}$ admits the following rate of convergence to $f_{G_*}$:
    \begin{align}
        \label{eq:model_bound}
        \normf{f_{\widehat{G}_n}-f_{G_*}}=\mathcal{O}_{P}([\log(n)/n]^{\frac{1}{2}}).
    \end{align}
\end{theorem}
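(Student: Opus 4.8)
The plan is to cast the least squares estimator as a maximum likelihood estimator over a suitable class of conditional densities and then invoke the standard theory of \cite{vandeGeer-00}, under which the Hellinger rate of convergence is dictated by the bracketing entropy of that class. Since the noise is Gaussian with variance $\nu$, minimizing $\sum_{i}(Y_i - f_G(X_i))^2$ is equivalent to maximizing the likelihood of the conditional densities $p_G(y\mid x) = (2\pi\nu)^{-1/2}\exp(-(y - f_G(x))^2/(2\nu))$, so that $\widehat{G}_n$ is the MLE over $\mathcal{P}_k(\Theta) := \{p_G : G \in \mathcal{G}_k(\Theta)\}$. Because we are in Regime 1, the over-specified gating parameters vanish and $f_{G_*}$ is the correctly specified target lying inside the fitted class; it therefore suffices to (i) verify a uniform boundedness property of the induced regression functions, (ii) bound the bracketing entropy of $\mathcal{P}_k(\Theta)$, and (iii) transfer the resulting Hellinger rate to the $L_2(\mu)$-norm on regression functions.

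The main technical step is the entropy bound. Every $f_G$ with $G\in\mathcal{G}_k(\Theta)$ is uniformly bounded: the sigmoid gate lies in $(0,1)$ and, by Assumption (A.2), the expert $h(\cdot,\eta)$ is bounded uniformly over the compact space $\Theta$, so $\|f_G\|_\infty \le kM$ for a finite constant $M$. Moreover, because the gate $x \mapsto 1/(1+\exp(-\beta_{1}^{\top}x-\beta_{0}))$ is Lipschitz in $(\beta_0,\beta_1)$ uniformly over the bounded input space $\mathcal{X}$ (Assumption (A.1)), and the experts are Lipschitz in $\eta$ (Assumption (A.2)), the map from the parameter vector $(\beta_{0i},\beta_{1i},\eta_i)_{i=1}^{k}\in\Theta^k$ to $f_G$ is Lipschitz with respect to $\normf{\cdot}$. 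A standard $\epsilon$-net argument on the compact, finite-dimensional set $\Theta^k$ then yields a covering number of order $(1/\epsilon)^{\mathcal{O}(k(d+q+1))}$, and hence a bracketing entropy bound $H_B(\epsilon,\mathcal{P}_k(\Theta),\|\cdot\|) \lesssim \log(1/\epsilon)$. Plugging this into the entropy integral gives $\int_0^\delta \sqrt{H_B(t,\mathcal{P}_k(\Theta),\|\cdot\|)}\,dt \lesssim \delta\sqrt{\log(1/\delta)}$, so the rate equation $\sqrt{n}\,\delta_n^2 \gtrsim \delta_n\sqrt{\log(1/\delta_n)}$ solves to $\delta_n \asymp \sqrt{\log(n)/n}$, giving the Hellinger rate $h(p_{\widehat{G}_n},p_{G_*}) = \mathcal{O}_P([\log(n)/n]^{1/2})$.

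Finally, I would transfer this Hellinger rate to the claimed $L_2(\mu)$ bound. For two Gaussian densities with common variance and uniformly bounded means, the squared Hellinger distance satisfies $h^2(p_G,p_{G_*}) \gtrsim \normf{f_G - f_{G_*}}^2$, where the constant depends only on $\nu$ and the uniform bound $kM$; integrating this pointwise comparison over $x\sim\mu$ gives $\normf{f_{\widehat{G}_n}-f_{G_*}} \lesssim h(p_{\widehat{G}_n},p_{G_*})$, and the conclusion~\eqref{eq:model_bound} follows. I expect the genuinely delicate points to be the careful bookkeeping of the logarithmic factor through the entropy integral and the verification of the Hellinger-to-$L_2(\mu)$ comparison; by contrast the entropy bound itself, although it is the heart of the argument, is routine given the compactness and Lipschitz assumptions (A.1)--(A.2), and notably it plays no role in distinguishing sigmoid from softmax gating at the level of the regression rate, since the interesting gating-dependent phenomena only emerge later through the inverse inequality relating $\normf{f_G - f_{G_*}}$ to the Voronoi loss.
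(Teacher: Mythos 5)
Your proposal is correct, and its technical core --- the bracketing entropy bound $H_B(\varepsilon,\cdot,\cdot)\lesssim\log(1/\varepsilon)$ obtained from compactness of $\Theta$ and the Lipschitz continuity of the sigmoid gate and the experts, fed into a van de Geer--type rate theorem with $\Psi(\delta)=\delta\sqrt{\log(1/\delta)}$ and $\delta_n\asymp\sqrt{\log(n)/n}$ --- is exactly what the paper does. Where you differ is in the framing: you recast the least squares estimator as a maximum likelihood estimator over the Gaussian conditional densities $p_G(y\mid x)$, derive a Hellinger rate from the entropy of the density class, and then convert Hellinger back to $\normf{f_{\widehat{G}_n}-f_{G_*}}$ using the identity $h^2(\mathcal{N}(\mu_1,\nu),\mathcal{N}(\mu_2,\nu))=1-\exp(-(\mu_1-\mu_2)^2/(8\nu))$ together with the uniform bound $\|f_G\|_\infty\leq kM$. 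The paper instead applies van de Geer's least-squares theory (her Theorems 7.4 and 9.2, stated here as Lemma~\ref{lemma:density_rate}) directly to the regression class $\mathcal{F}_k(\Theta)$ under the $L_2(\mu)$-norm, which avoids the density reformulation entirely: no Hellinger-to-$L_2$ comparison is needed, and the Gaussian assumption on the noise is used only through sub-Gaussianity rather than essentially, as in your likelihood identification. Your detour is harmless here because the regression functions are uniformly bounded (so the Hellinger comparison has a nondegenerate constant), but the direct route is shorter and slightly more robust. Your closing observation is also accurate: this theorem is gating-agnostic, and the sigmoid-versus-softmax distinction only enters through the subsequent inverse bounds relating $\normf{f_G-f_{G_*}}$ to the Voronoi losses.
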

The proof of Theorem~\ref{theorem:over_regression} is in Appendix~\ref{appendix:over_regression}. The above result indicates that if we are able to design a loss function among parameters $\mathcal{D}$ that satisfies the lower bound $\normf{f_{\widehat{G}_n}-f_{G_*}}\gtrsim \mathcal{D}(\widehat{G}_n,G_*)$, then we obtain parameter estimation rates via the bound $\mathcal{D}(\widehat{G}_n,G_*)=\mathcal{O}_P([\log(n)/n]^{\frac{1}{2}})$. Consequently, those parameter estimation rates lead to our desired expert estimation rates.

Next, under the Regime 2, the true model is misspecified, i.e., the regression estimator $f_{\widehat{G}_n}$ converges to the missepcified regression function $f_{\overline{G}}$ rather than the true regression function $f_{G_*}$, where $\overline{G}\in\overline{\mathcal{G}}_k(\Theta):=\argmin_{G\in\mathcal{G}_k(\Theta)\setminus\mathcal{G}_{k_*}(\Theta)}\normf{f_{G}-f_{G_*}}$. By employing the same arguments as in Theorem~\ref{theorem:over_regression}, we also capture the parametric regression estimation rate $\mathcal{O}_{P}([\log(n)/n]^{\frac{1}{2}})$ for $\normf{f_{\widehat{G}_n}-f_{\overline{G}}}$ under this regime, which is stated in the following corollary.
\begin{corollary}
    \label{corollary:over_regression_regime2}
     Under the Regime 2, the regression estimator $f_{\widehat{G}_n}$ admits the following rate of convergence to $f_{\overline{G}}$: $\inf_{\overline{G}\in\overline{\mathcal{G}}_k(\Theta)}\normf{f_{\widehat{G}_n}-f_{\overline{G}}}=\mathcal{O}_{P}([\log(n)/n]^{\frac{1}{2}}).$
\end{corollary}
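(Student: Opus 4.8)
The plan is to mirror the empirical-process argument behind Theorem~\ref{theorem:over_regression}, but with the true regression function $f_{G_*}$ replaced by its $L_2(\mu)$-projection $f_{\overline{G}}$ onto the model class $\mathcal{F}_k:=\{f_{G}:G\in\mathcal{G}_k(\Theta)\}$. First I would note that such a projection exists: by Assumptions (A.1)--(A.2) the map $G\mapsto f_G$ is continuous and $\mathcal{G}_k(\Theta)$ is compact, so the infimum defining $\overline{\mathcal{G}}_k(\Theta)$ is attained and $f_{\overline{G}}$ is a well-defined element of $L_2(\mu)$. Fixing one such minimizer $\overline{G}$ and writing $g:=f_{\widehat{G}_n}-f_{\overline{G}}$ and the approximation error $r:=f_{G_*}-f_{\overline{G}}$, the optimality of $\widehat{G}_n$ together with $Y_i=f_{G_*}(X_i)+\epsilon_i$ gives the basic inequality
\begin{align*}
\frac{1}{n}\sum_{i=1}^{n}g(X_i)^2\;\le\;\frac{2}{n}\sum_{i=1}^{n}\epsilon_i\,g(X_i)\;+\;\frac{2}{n}\sum_{i=1}^{n}r(X_i)\,g(X_i).
\end{align*}
The first term on the right is exactly the stochastic term treated in Theorem~\ref{theorem:over_regression}; the second is a deterministic cross-term, absent in the well-specified case, and it is the genuinely new ingredient.

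For the stochastic term I would reuse verbatim the entropy estimate underlying Theorem~\ref{theorem:over_regression}: since (A.1) confines the parameters to a compact set and (A.2) makes the experts bounded and Lipschitz in $\eta$, the bracketing entropy of $\mathcal{F}_k$ obeys $\log N_{[]}(\delta,\mathcal{F}_k,\normf{\cdot})\lesssim\log(1/\delta)$, the hallmark of a parametric class. Feeding this into the standard peeling bound for sub-Gaussian noise yields $\frac{2}{n}\sum_i\epsilon_i g(X_i)\lesssim_{P}\sqrt{\log n/n}\,\normf{g}+\log n/n$ uniformly over $\mathcal{F}_k$, and the same entropy control gives the norm equivalence $\frac{1}{n}\sum_i g(X_i)^2\ge(1-o_P(1))\normf{g}^2-\mathcal{O}_P(\log n/n)$ on the relevant range $\normf{g}\gtrsim\sqrt{\log n/n}$.

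The crux is the cross-term. I would split $\frac{2}{n}\sum_i r(X_i)g(X_i)=2\langle r,g\rangle_{L_2(\mu)}+2\big(\frac{1}{n}\sum_i r(X_i)g(X_i)-\langle r,g\rangle_{L_2(\mu)}\big)$; the fluctuation is handled by the same parametric-entropy empirical process with the fixed bounded multiplier $r$, contributing at most $\mathcal{O}_P(\sqrt{\log n/n}\,\normf{g})$. The population part is governed by the optimality of $\overline{G}$: since $\overline{G}$ minimizes $\normf{f_G-f_{G_*}}^2=\normf{g}^2-2\langle r,g\rangle_{L_2(\mu)}+\normf{r}^2$ over $\mathcal{F}_k$, the population excess risk $R(g):=\normf{g}^2-2\langle r,g\rangle_{L_2(\mu)}$ is nonnegative. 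The main obstacle is to upgrade this to a quadratic lower bound $R(g)\ge c\,\normf{g}^2$ for some $c>0$ near $\overline{G}$ — the well-separatedness of the projection $\overline{G}$ as a minimizer of the population risk over the \emph{non-convex} image $\mathcal{F}_k$ — since this is precisely what lets one absorb the cross-term into $\normf{g}^2$ (in Theorem~\ref{theorem:over_regression} it is vacuous because $r=0$). Granting it, substituting $2\langle r,g\rangle_{L_2(\mu)}\le(1-c)\normf{g}^2$ and the two empirical-process bounds into the basic inequality collapses it to $\normf{g}^2\lesssim_P\sqrt{\log n/n}\,\normf{g}+\log n/n$, and solving this quadratic inequality delivers $\normf{f_{\widehat{G}_n}-f_{\overline{G}}}=\mathcal{O}_P([\log(n)/n]^{1/2})$, as claimed.
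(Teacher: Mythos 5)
Your decomposition is sound and you have correctly located the one place where the argument for Theorem~\ref{theorem:over_regression} does not transfer: the cross-term $2\langle r,g\rangle_{L_2(\mu)}$ with $r=f_{G_*}-f_{\overline{G}}$. (The paper itself offers no proof of this corollary beyond asserting that ``the same arguments'' as in Theorem~\ref{theorem:over_regression} apply, so your write-up is more explicit than the source about where the real work lies; the reuse of the bracketing-entropy bound $\log N_{[]}(\delta)\lesssim\log(1/\delta)$ for the stochastic term is exactly what the paper intends.) However, the proposal has a genuine gap at precisely the step you flag and then grant: the quadratic lower bound $R(g)=\normf{g}^2-2\langle r,g\rangle_{L_2(\mu)}\ge c\,\normf{g}^2$. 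For a convex model class this would follow from the variational inequality $\langle r,\,f-f_{\overline{G}}\rangle_{L_2(\mu)}\le 0$ characterizing the $L_2(\mu)$-projection, but $\mathcal{F}_k$ is the image of a compact parameter set under a nonlinear map and is not convex, so optimality of $\overline{G}$ only gives $R(f_G-f_{\overline{G}})\ge 0$. Nonnegativity alone converts your basic inequality into a bound on the \emph{excess risk} $\normf{f_{\widehat{G}_n}-f_{G_*}}^2-\normf{f_{\overline{G}}-f_{G_*}}^2$, not into a rate for $\normf{f_{\widehat{G}_n}-f_{\overline{G}}}$.

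Moreover, no constant $c>0$ can work globally: the paper explicitly allows $\overline{\mathcal{G}}_k(\Theta)$ to contain several minimizers (Theorem~\ref{theorem:exact_LSE} takes an infimum and a supremum over this set), and if $\overline{G}\neq\overline{G}'$ are two distinct projections then $R(f_{\overline{G}'}-f_{\overline{G}})=0$ while $f_{\overline{G}'}\neq f_{\overline{G}}$. So at best the corollary can assert a rate for $\inf_{\overline{G}\in\overline{\mathcal{G}}_k(\Theta)}\normf{f_{\widehat{G}_n}-f_{\overline{G}}}$, i.e., the distance to the set of projections. To close the argument you would need either (i) a consistency step placing $\widehat{G}_n$ in a neighborhood of the projection set, together with a local lower bound $R(f_G-f_{\overline{G}})\gtrsim\normf{f_G-f_{\overline{G}}}^2$ in that neighborhood (a statement of the same nature as, but logically prior to, the identifiability lower bound of Theorem~\ref{theorem:exact_LSE}), or (ii) a direct argument that the empirical risk comparison can be made against the \emph{nearest} projection rather than a fixed one. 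As written, the decisive inequality is assumed rather than proved, and it is exactly the point at which the well-specified proof genuinely breaks.
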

    


\section{Convergence Rates for Expert Estimation}
\label{sec:over_specified}

In this section we will establish the expert estimation rates under both Regime 1 and the Regime 2; see Section~\ref{sec:regime_1} and Section~\ref{sec:regime_2}, respectively. Under each regime, we will formulate appropriate conditions, which we refer to as {\it identifiability conditions}, on the expert functions that will guarantee fast estimation rates. Furthermore, we will determine the (slow) estimation rates of some commonly used experts which fail to satisfy such identifiability conditions.

\subsection{Regime 1 of Gating Parameters}
\label{sec:regime_1}

Recall that under the Regime 1, all the over-specified parameters $\beta^*_{1i}$, i.e., those  fitted by at least two parameters, are equal to $\zerod$. Without loss of generality, we assume that $\beta^*_{11},\ldots,\beta^*_{1\kbar}$ are over-specified parameters, where $1\leq\kbar\leq k_*$. The remaining parameters $\beta^*_{11}=\ldots=\beta^*_{1\kbar}=\zerod$. Meanwhile, $\beta^*_{1(\kbar+1)},\ldots,\beta^*_{1k_*}$ are assumed to be fitted by exactly one estimator.

Following a strategy that has been successfully used for analyzing MoE and mixture models (see, e.g. \cite{manole22refined} and \cite{nguyen2024squares}), in order to derive the expert estimation rates, it is sufficient to propose a Voronoi loss function $\mathcal{D}$ among parameters, and then prove the lower bound $\normf{f_{\widehat{G}_n}-f_{G_*}}\gtrsim \mathcal{D}(\widehat{G}_n,G_*)$. Given the parametric regression estimation rate in Theorem~\ref{theorem:over_regression}, we then obtain that $\mathcal{D}(\widehat{G}_n,G_*)=\mathcal{O}_P([\log(n)/n]^{\frac{1}{2}})$. 
Note that a key step in  deriving the previous lower bound is to decompose the difference $f_{\widehat{G}_n}(x)-f_{G_*}(x)$ into a combination of linearly independent terms.

Towards that goal, we will  Taylor expand to the product of mixture weight and expert functions $F(x,\beta_1,\beta_0,\eta):=\sigma(x,\beta_1,\beta_0)h(x,\eta)$, where we let $\sigma(x,\beta_1,\beta_0):=\frac{1}{1+\exp(-\beta_{1}^{\top}x-\beta_{0})}$. To obtain the desired decomposition of $f_{\widehat{G}_n}(x)-f_{G_*}(x)$, we provide below in Definition~\ref{def:over_general_conditions} a condition, which we refer to as a \emph{strong identifiability} condition, that guarantees that the derivatives of the product $F(x,\beta_1,\beta_0,\eta)$ w.r.t its parameters are linearly independent. 

\begin{definition}[Strong identifiability]
     \label{def:over_general_conditions}
    An expert function $h(x,\eta)$ is called strongly identifiable if it is twice differentiable w.r.t its parameter $\eta$ for $\mu$-almost all $x$ and, for any positive integer $\ell$ and any pair-wise distinct choices of parameters $\{ (\beta_{0i},\beta_{1i},\eta_i) \}_{i=1}^{\ell}$, the functions in the classes
\[
\Bigg\{x \mapsto \frac{\partial^{|\gamma_1|+|\gamma_2|}F}{\partial\beta_1^{\gamma_1}\partial\eta^{\gamma_2}}(x,\zerod,\beta_{0i},\eta_i): i\in[\ell],(\gamma_1,\gamma_2)\in\mathbb{N}^d\times\mathbb{N}^q, 1\leq|\gamma_1|+|\gamma_2|\leq 2\Bigg\}
\]
and 
\[
\Bigg\{x \mapsto  \frac{\partial F}{\partial\beta_1^{\alpha_1}\partial\beta_0^{\alpha_2}\partial\eta^{\alpha_3}}(x,\beta_{1i},\beta_{0i},\eta_i):i\in[\ell],(\alpha_1,\alpha_2,\alpha_3)\in\mathbb{N}^d\times\mathbb{N}\times\mathbb{N}^q,|\alpha_1|+\alpha_2+|\alpha_3|=1\Bigg\}
\]
are linearly independent, for $\mu$-almost all $x$.
\end{definition}

\textbf{Example.} Consider experts formulated as two-layer neural networks, i.e., $h(x,(a,b))=\varphi(a^{\top}x+b)$, where $\varphi$ is an activation function and $(a,b)\in\mathbb{R}^d\times\mathbb{R}$. If $a=\zerod$, then the expert $h(\cdot,(a,b))$ is not strongly identifiable for any choices of the activation function $\varphi$.  On the other hand, if $a\neq\zerod$ and 
$\varphi$ is one among popular activation functions such as $\relu$ and $\gelu$, then $h(\cdot,(a,b))$ is a strongly identifiable expert.  By contrast, if $\varphi$ is a polynomial, e.g. $\varphi(z)=z^p$, then the expert $h(\cdot,(a,b))$ does not meet the strong identifiability condition.


{Intuitively}, the strong identifiability condition helps eliminate potential interactions among parameters expressed in the language of PDE (see equations~\eqref{eq:regime1_PDE} and \eqref{eq:PDE_polynomial} where gating parameters $\beta_1$ interact with expert parameters $a$). Such interactions are demonstrated to result in significantly slow expert estimation rates presented in Theorem~\ref{theorem:over_singular_activation_experts} and \ref{theorem:linear_experts}). From the technical perspective, a key step in our proof techniques rely on the decomposition of the discrepancy $f_{\widehat{G}_n}(x)-f_{G_*}(x)$ into a combination of linearly independent terms. This can be done by applying Taylor expansions to the function $F(x,\beta_1,\beta_0,\eta)=\sigma(\beta_1^{\top}x+\beta_0)h(x,\eta)$ defined as the product of the sigmoid gating and the expert function $h$. Thus, the condition is to ensure that terms in the decomposition are linearly independent.

Next, following the strategy first introduced by Manole et al. \cite{manole22refined} and then adopted in several convergence analyses of MoE \cite{nguyen2024temperature,nguyen2024general}, we proceed to construct a loss function among parameters based on the notion of Voronoi cells.
 
\textbf{Voronoi loss.} Given an arbitrary mixing measure $G$ with $k'\leq k$ atoms, we distribute its atoms across the  Voronoi cells $\{ \mathcal{A}_j\equiv\mathcal{A}_j(G)$, $j \in [k_*]$\} generated by the atoms of $G_*$, where
\begin{align}
    \label{eq:Voronoi_cells}
    \mathcal{A}_j:=\{i\in[k']:\|\omega_i-\omega^*_j\|\leq\|\omega_i-\omega^*_{\ell}\|,\forall \ell\neq j\},
\end{align}
with $\omega_i:=(\beta_{1i},\eta_i)$ and $\omega^*_j:=(\beta^*_{1j},\eta^*_j)$. In particular, the cardinality of the Voronoi cell $\mathcal{A}_j$ corresponding to the least squares estimator $\widehat{G}_n$ in \eqref{eq:least_squared_estimator} is  the number of fitted components assigned (and, likely, converging) to the true atoms $\omega^*_j$. Then, we define the   Voronoi loss function 
\begin{align}
    \label{eq:over_general_loss}
    \mathcal{D}_{1}(G,G_*):=\sum_{j=1}^{\kbar}\Big|\sum_{i\in\mathcal{A}_j}\frac{1}{1+\exp(-\beta_{0i})}-&\frac{1}{1+\exp(-\beta^*_{0j})}\Big|+\sum_{j=1}^{\kbar}\sum_{i\in\mathcal{A}_j}\Big[\|\Delta\beta_{1ij}\|^2+\|\Delta \eta_{ij}\|^2\Big]\nonumber\\
    &+\sum_{j=\kbar+1}^{k_*}\sum_{i\in\mathcal{A}_j}\Big[\|\Delta\beta_{1ij}\|+|\Delta\beta_{0ij}|+\|\Delta \eta_{ij}\|\Big],
\end{align}
where we let $\Delta\beta_{1ij}:=\beta_{1i}-\beta^*_{1j}$, $\Delta\beta_{0ij}:=\beta_{0i}-\beta^*_{0j}$, and $\Delta \eta_{ij}:=\eta_i-\eta^*_j$. Above, if the Voronoi cell $\mathcal{A}_j$ is empty, then the summation term becomes zero by convention. Additionally, since $\mathcal{D}_{1}(G,G_*)=0$ if and only if $G\equiv G_*$, it follows that when $\mathcal{D}_{1}(G,G_*)$ is sufficiently small, the differences $\Delta\beta_{0ij}$, $\Delta\beta_{1ij}$ and $\Delta\eta_{ij}$ are also small. This observation suggests that, though not symmetric in its arguments and therefore not a metric, $\mathcal{D}_{1}(G,G_*)$ is nonetheless a suitable loss function for capturing the discrepancy between the least squares estimator $\widehat{G}_n$ and the true mixing measures $G_*$. Furthermore, it is worth noting that  the Voronoi loss function $\mathcal{D}_{1}$ can be efficiently evaluated, as its computational complexity is of order $\mathcal{O}(k \times k_{*})$. 

In our next result, whose proof can be found in Appendix~\ref{appendix:over_LSE_regime_1}, we show that the Voronoi loss between the least squares estimator and the true mixing measure is upper bounded, up to universal multiplicative constants, by the estimation error for the regression function.
\begin{theorem}
    \label{theorem:over_LSE_regime_1}
     Let $x \mapsto h(x,\eta)$ be a strongly identifiable expert function. Then the lower bound
    \begin{align*}
        \normf{f_{G}-f_{G_*}}\gtrsim\mathcal{D}_{1}(G,G_*),
    \end{align*}
    holds true for any $G\in\mathcal{G}_{k}(\Theta)$. As a consequence, this result together with the regression estimation rate in  Theorem~\ref{theorem:over_regression} indicate that $\mathcal{D}_{1}(\widehat{G}_n,G_*)=\mathcal{O}_{P}([\log(n)/n]^{\frac{1}{2}})$.
\end{theorem}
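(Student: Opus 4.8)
The plan is to establish the lower bound $\normf{f_{G}-f_{G_*}}\gtrsim\mathcal{D}_{1}(G,G_*)$ via the standard local-to-global scheme. First I would reduce the claim to its local version: there exist constants $\epsilon,c>0$ such that $\normf{f_{G}-f_{G_*}}\geq c\,\mathcal{D}_{1}(G,G_*)$ for every $G\in\mathcal{G}_{k}(\Theta)$ with $\mathcal{D}_{1}(G,G_*)\leq\epsilon$. The complementary global regime $\mathcal{D}_{1}(G,G_*)>\epsilon$ is dispatched by a compactness argument: since $\Theta$ is compact by Assumption (A.1) and the sigmoid-gated MoE is identifiable, so that $\normf{f_{G}-f_{G_*}}=0$ forces $G\equiv G_*$, the infimum of $\normf{f_{G}-f_{G_*}}$ over $\{\mathcal{D}_{1}(G,G_*)\geq\epsilon\}$ is strictly positive while $\mathcal{D}_{1}$ is bounded there; dividing the two yields the required constant.

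For the local part I would argue by contradiction, assuming a sequence $G_{n}$ with $\mathcal{D}_{1}(G_{n},G_*)\to 0$ and $\normf{f_{G_{n}}-f_{G_*}}/\mathcal{D}_{1}(G_{n},G_*)\to 0$. The heart of the proof is to decompose the numerator $f_{G_{n}}(x)-f_{G_*}(x)$ into a linear combination of the derivative functions appearing in Definition~\ref{def:over_general_conditions}. Concretely, for each true atom I would group the fitted atoms lying in its Voronoi cell $\mathcal{A}_{j}$ and Taylor expand the product $F(x,\beta_{1},\beta_{0},\eta)=\sigma(x,\beta_{1},\beta_{0})\,h(x,\eta)$ about the corresponding true parameters. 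The expansion is carried to second order in $(\beta_{1},\eta)$ at the over-specified atoms $j\leq\kbar$, where $\beta^*_{1j}=\zerod$ so the base point is $(\zerod,\beta^*_{0j},\eta^*_{j})$, and only to first order in $(\beta_{1},\beta_{0},\eta)$ at the exactly-specified atoms $j>\kbar$; this asymmetry is exactly what dictates the quadratic-versus-linear weighting in the two parameter sums defining $\mathcal{D}_{1}$. Collecting like terms then expresses $f_{G_{n}}-f_{G_*}$ as a sum over the two families of basis functions from Definition~\ref{def:over_general_conditions}, with scalar coefficients built out of the weight gaps $\sum_{i\in\mathcal{A}_{j}}(1+\exp(-\beta_{0i}))^{-1}-(1+\exp(-\beta^*_{0j}))^{-1}$ and of the polynomial combinations of $\Delta\beta_{1ij},\Delta\beta_{0ij},\Delta\eta_{ij}$, plus a Taylor remainder of smaller order than $\mathcal{D}_{1}(G_{n},G_*)$.

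I would then divide the whole decomposition by $\mathcal{D}_{1}(G_{n},G_*)$. By construction every normalized coefficient is bounded, so along a subsequence they converge to limits that are not all zero; passing to the limit, the normalized numerator tends to $0$ in $L_{2}(\mu)$, so the limiting coefficients furnish a nontrivial linear dependence, valid for $\mu$-almost every $x$, among the derivative functions of Definition~\ref{def:over_general_conditions}. Strong identifiability asserts precisely that these functions are linearly independent, forcing all limiting coefficients to vanish and giving the sought contradiction. The delicate step, which I expect to be the main obstacle, is showing that the normalized coefficients do \emph{not} all tend to zero: one must verify that each scalar quantity controlled by $\mathcal{D}_{1}$ genuinely reappears as a dominant part of some Taylor coefficient, carefully matching the second-order terms at the over-specified atoms with the first-order terms elsewhere and ruling out hidden cancellations arising from the $\beta^*_{1j}=\zerod$ structure of the gating weights. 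Once the local bound is secured, combining it with the global bound yields the stated inequality, and the conclusion $\mathcal{D}_{1}(\widehat{G}_n,G_*)=\mathcal{O}_{P}([\log(n)/n]^{\frac{1}{2}})$ follows immediately by substituting the regression rate of Theorem~\ref{theorem:over_regression} into $\normf{f_{\widehat{G}_n}-f_{G_*}}\gtrsim\mathcal{D}_{1}(\widehat{G}_n,G_*)$.
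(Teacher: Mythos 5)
Your proposal is correct and follows essentially the same route as the paper: a local--global reduction (with the global part handled by compactness plus identifiability of the sigmoid-gated MoE), a second-order Taylor expansion of $F=\sigma\cdot h$ at the over-specified atoms (base point $(\zerod,\beta^*_{0j},\eta^*_j)$) and a first-order expansion at the exactly-specified ones, a non-vanishing-coefficients step, and a Fatou-type limiting argument that produces a nontrivial linear dependence contradicting strong identifiability. The only minor difference is that the ``delicate'' non-vanishing step you flag is handled in the paper by the simple observation that the absolute values of the relevant Taylor coefficients sum (up to constants) to $\mathcal{D}_{1}(G_n,G_*)$ itself, so they cannot all be $o(\mathcal{D}_{1}(G_n,G_*))$.
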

The following remarks regarding the results of Theorem~\ref{theorem:over_LSE_regime_1} are in order.

\textbf{(i)} It follows from the formulation of the loss function $\mathcal{D}_1$ that the estimation rates for the over-specified parameters $\beta^*_{1j},\eta^*_{1j}$, where $j\in[\kbar]$, are all of order $\mathcal{O}_P([\log(n)/n]^{\frac{1}{4}})$. Note that as the expert $h(\cdot,\eta)$ is twice differentiable over a bounded domain, it is also a Lipschitz function. Thus, letting $\widehat{G}_n:=\sum_{i=1}^{\widehat{k}_n}\frac{1}{1+\exp(-\widehat{\beta}_{0i})}\delta_{(\widehat{\beta}^n_{1i},\widehat{\eta}^n_i)}$,  we obtain that
\begin{align}
    \label{eq:expert_rate}
    \sup_{x} |h(x,\widehat{\eta}^n_i)-h(x,\eta^*_j)| \leq L_1~\|\widehat{\eta}^n_i-\eta^*_j\|\lesssim \mathcal{O}_P([\log(n)/n]^{\frac{1}{4}}),     
\end{align}
for any $i\in\mathcal{A}_j(\widehat{G}_n)$, where $L_1\geq 0$ is a Lipschitz constant.
The above bound indicates that if the strongly identifiable expert $h(\cdot,\eta^*_j)$ is over-specified, i.e. fitted by at least two experts, then its estimation rate is of order  $\mathcal{O}_P([\log(n)/n]^{\frac{1}{4}})$. 

\textbf{(ii)} Secondly, for exact-specified parameters $\beta^*_{1j},\eta^*_{j}$, where $\kbar+1\leq j\leq k_*$, the rates for estimating them are faster than those of their over-specified counterparts in Remark (i), standing at order $\mathcal{O}_P([\log(n)/n]^{\frac{1}{2}})$. By arguing as in equation~\eqref{eq:expert_rate}, we deduce that the expert $h(\cdot,\eta^*_j)$ also enjoys the faster estimation rate of order $\mathcal{O}_P([\log(n)/n]^{\frac{1}{2}})$, which is parametric on the sample size $n$.

\textbf{(iii)} Putting the above two remarks together, we observe that when the expert functions are formulated as feed-forward networks with a widely used activation, namely $\relu$ and $\gelu$, their estimation rates under the sigmoid gating MoE matches exactly those under the softmax gating MoE: see Theorem 3.2 in \cite{nguyen2024squares}. 

\textbf{Non-strongly identifiable experts.} The strong identifiability condition of Definition~\ref{def:over_general_conditions} is far from a technical requirement: it is in fact a crucial assumption. To illustrate its importance, we investigate the estimation rates of {\it ridge experts} of the form $\{ h(x,(a^*_j,b^*_j))=\varphi((a^*_j)^{\top}x+b^*_j), j\in[k_*]\}$, where $\varphi$ is a scalar function,  that fail to satisfy the strong identifiability condition. For ridge experts, the strong identifiability will not be met in two cases: when at least one among over-specified parameters $a^*_j$ equals $\zerod$, for any arbitrary activation function $\varphi$ (\textbf{Scenario I}) and when  $\varphi$ is a polynomial of the form $\varphi(z)=z^p$, for $p\in\mathbb{N}$ (\textbf{Scenario II}).

\textbf{Scenario I.} 
Without loss of generality, we may assume that $a^*_1=\zerod$, which means that the first expert $\varphi((a^*_1)^{\top}x+b^*_1)$ becomes independent of the input $x$. This gives rise to an interaction among the gating parameter $\beta_{1}$ and the expert parameter $a$ via the  partial differential equation (PDE)
\begin{align}
    \label{eq:regime1_PDE}
    \frac{\partial F}{\partial \beta_1}(x;\beta^*_{11},\beta^*_{01},a^*_{1},b^*_{1})=C_{b^*_1,\beta^*_{01}}\cdot\frac{\partial F}{\partial a}(x;\beta^*_{11},\beta^*_{01},a^*_{1},b^*_{1}),
\end{align}
where we recall that $F(x;\beta_1,\beta_0,a,b):=\sigma(x;\beta_{1},\beta_{0})\varphi(a^{\top}x+b)$, and $C_{b^*_1,\beta^*_{01}}$ is some constant depending on $b^*_1$ and $\beta^*_{01}$. The above PDE can be verified by taking the derivatives of $F(x;\beta_1,\beta_0,a,b)$ w.r.t $\beta_1$ and $a$ and using the fact that $\beta^*_{11}=a^*_1=\zerod$. Notably, the PDE~\eqref{eq:regime1_PDE} accounts for the violation of the strong identifiability of the expert $\varphi((a^*_1)^{\top}x+b^*_1)$ under the Scenario I. To this end, we propose the following Voronoi loss to analyze the effects of such parameter interaction on the expert estimation rates in Theorem~\ref{theorem:over_singular_activation_experts}: 
\begin{align}
    \label{eq:over_activation_loss}
    \mathcal{D}_{2,r}(G,G_*):&=\sum_{j=1}^{\kbar}\Big|\sum_{i\in\mathcal{A}_j}\frac{1}{1+\exp(-\beta_{0i})}-\frac{1}{1+\exp(-\beta^*_{0j})}\Big|+\sum_{j=1}^{\kbar}\sum_{i\in\mathcal{A}_j}\Big[\|\Delta\beta_{1ij}\|^r+\|\Delta a_{ij}\|^r\nonumber\\
    &+|\Delta b_{ij}|^r\Big]+\sum_{j=\kbar+1}^{k_*}\sum_{i\in\mathcal{A}_j}\Big[|\Delta\beta_{0ij}|^r+\|\Delta\beta_{1ij}\|^r+\|\Delta a_{ij}\|^r+|\Delta b_{ij}|^r\Big].
\end{align}
The following result establishes a minimax lower bound on the estimation error of $G_*$ in the $\mathcal{D}_{2,r}$ loss.
\begin{theorem}
  \label{theorem:over_singular_activation_experts}
    Suppose that the experts take the form $\varphi(a^{\top}x+b)$. Then, under the Scenario I, we have 
    \begin{align*}
        \inf_{\widetilde{G}_n\in\mathcal{G}_{k}(\Theta)}\sup_{G\in\mathcal{G}_{k}(\Theta)\setminus\mathcal{G}_{k_*-1}(\Theta)}\bbE_{f_{G}}[\mathcal{D}_{2,r}(\widetilde{G}_n,G)]\gtrsim n^{-1/2},
    \end{align*}
    for any $r\geq 1$, where $\bbE_{f_{G}}$ indicates that the expectation taken w.r.t the product measure with $f^n_{G}$, and the infimum is over all estimators taking values in $\mathcal{G}_{k}(\Theta)$.
\end{theorem}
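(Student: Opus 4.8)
The plan is to prove the lower bound by Le Cam's two-point method. For each $n$ I would exhibit two mixing measures $G_*,G_n\in\mathcal{G}_{k}(\Theta)\setminus\mathcal{G}_{k_*-1}(\Theta)$ whose induced regression functions are statistically indistinguishable from $n$ samples yet well separated in the $\mathcal{D}_{2,r}$ loss; concretely, the target is $\normf{f_{G_n}-f_{G_*}}\lesssim n^{-1/2}$ together with $\mathcal{D}_{2,r}(G_n,G_*)\gtrsim n^{-1/2}$. Since the noise is Gaussian with variance $\nu$, the Kullback--Leibler divergence between the product laws satisfies $\mathrm{KL}(P^n_{G_*},P^n_{G_n})\lesssim n\,\normf{f_{G_n}-f_{G_*}}^2\lesssim 1$, so their total variation stays bounded away from one, which is exactly the ingredient needed to turn a two-point separation into a minimax lower bound.

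The construction is where Scenario~I enters. Fix $G_*$ whose first atom is the degenerate one, with $\beta_{11}^*=\zerod$ and $a_1^*=\zerod$, so that the term $\sigma(x;\zerod,\beta_{01}^*)\,\varphi(b_1^*)$ is constant in $x$. I build $G_n$ by perturbing only this over-specified atom. To keep the regression cost low I exploit the degeneracy recorded in the PDE~\eqref{eq:regime1_PDE}: perturbing $\beta_{11}$ by $\epsilon_n v$ and $a_1$ by $-C_{b_1^*,\beta_{01}^*}\,\epsilon_n v$ makes the first-order term of the Taylor expansion of $f_{G_n}-f_{G_*}$ vanish, so that $\normf{f_{G_n}-f_{G_*}}$ is only of order $\epsilon_n^2$ while $\|\Delta\beta_{1}\|$ and $\|\Delta a\|$ remain of order $\epsilon_n$. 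Simultaneously I shift the gating bias $\beta_{01}$ of this input-free component by an amount of order $\epsilon_n^2$; because the component is constant in $x$, this moves $f_{G_*}$ only by a constant of the same order, leaving the $\mathcal{O}(\epsilon_n^2)$ regression bound intact while changing the mixing proportion $(1+e^{-\beta_{01}})^{-1}$ by order $\epsilon_n^2$.

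Calibrating the scale then yields the bound uniformly in $r$. The key observation is that $\mathcal{D}_{2,r}$ contains the mixing-proportion discrepancy $\big|\sum_{i\in\mathcal{A}_1}(1+e^{-\beta_{0i}})^{-1}-(1+e^{-\beta_{01}^*})^{-1}\big|$ with power one, whereas the parameter discrepancies enter with power $r$. With $\normf{f_{G_n}-f_{G_*}}\asymp\epsilon_n^2$, I set $\epsilon_n\asymp n^{-1/4}$ when $r\ge 2$, so the power-one mixing term contributes $\asymp\epsilon_n^2\asymp n^{-1/2}$; and $\epsilon_n\asymp n^{-1/(2r)}$ when $1\le r<2$, so the power-$r$ parameter terms contribute $\asymp\epsilon_n^r\asymp n^{-1/2}$ (one checks $\epsilon_n^2=n^{-1/r}\lesssim n^{-1/2}$ in this range, so the regression discrepancy stays admissible). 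In either case $\mathcal{D}_{2,r}(G_n,G_*)\gtrsim n^{-1/2}$ while $\normf{f_{G_n}-f_{G_*}}\lesssim n^{-1/2}$. Finally, as $\mathcal{D}_{2,r}$ is not a metric, I convert the separation into the minimax statement through the standard testing reduction: a quasi-triangle inequality for the loss shows that any estimator accurate for one hypothesis is inaccurate for the other, and combining this with the total-variation bound gives $\inf_{\widetilde{G}_n}\max_{G\in\{G_*,G_n\}}\bbE_{f_G}[\mathcal{D}_{2,r}(\widetilde{G}_n,G)]\gtrsim n^{-1/2}$, whence the supremum over $\mathcal{G}_{k}(\Theta)\setminus\mathcal{G}_{k_*-1}(\Theta)$ is at least $n^{-1/2}$.

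I expect the main obstacle to be the regression estimate $\normf{f_{G_n}-f_{G_*}}\lesssim n^{-1/2}$, that is, certifying that the first-order cancellation supplied by the PDE~\eqref{eq:regime1_PDE} genuinely lowers the order of the discrepancy. This calls for a careful Taylor expansion of the product $F=\sigma\cdot\varphi$ around the degenerate point, with uniform control of the second-order remainder over the bounded input domain, using only that $\varphi$ (e.g. $\relu$ or $\gelu$) is sufficiently regular and that the perturbed component is input-free. This is precisely the point at which the failure of strong identifiability is exploited: because no bound of the form $\normf{f_G-f_{G_*}}\gtrsim\mathcal{D}_{2,r}(G,G_*)$ holds at small $r$, the regression discrepancy can be pushed below the parameter loss, which is what renders the $n^{-1/2}$ floor unbeatable and contrasts sharply with the strongly identifiable regime of Theorem~\ref{theorem:over_LSE_regime_1}.
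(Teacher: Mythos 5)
Your proposal is correct in outline and shares the paper's skeleton --- a Le Cam two-point reduction in which one exhibits $G_n$ with $\normf{f_{G_n}-f_{G_*}}\lesssim n^{-1/2}$ but $\mathcal{D}_{2,r}(G_n,G_*)\gtrsim n^{-1/2}$ --- but the construction of the indistinguishable pair is genuinely different. The paper (via Lemma~\ref{lemma:minimax_lower_bound}) never touches the gating slope or the expert slope: it splits the input-free atom into \emph{two} components whose sigmoid weights sum exactly to the original weight and whose expert biases are $b^*_1+c/n$ and $b^*_1+2c/n$, with $c$ chosen as a real root of an odd-degree polynomial so that \emph{all} Taylor coefficients of $\sum_i[\varphi(b^n_i)-\varphi(b^*_1)]$ up to order $r$ (or $r+1$) cancel. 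Because the perturbed component is constant in $x$, the difference $f_{G_n}-f_{G_*}$ is a constant and no uniform-in-$x$ remainder control is needed; the payoff is the strictly stronger statement $\normf{f_{G_n}-f_{G_*}}/\mathcal{D}_{2,r}(G_n,G_*)\to 0$, obtained from a single one-parameter family valid for every $r$. You instead keep $k_*$ atoms and perturb $(\beta_{11},a_1)$ along the kernel direction of the first-order PDE~\eqref{eq:regime1_PDE}, killing only the first-order term, so the ratio at your calibrated scale is bounded rather than vanishing; this still suffices for the $n^{-1/2}$ minimax bound (after shrinking the perturbation by a constant to keep the total variation below one), but it forces the case split $\epsilon_n\asymp n^{-1/4}$ versus $\epsilon_n\asymp n^{-1/(2r)}$ and the auxiliary $\mathcal{O}(\epsilon_n^2)$ shift of $\beta_{01}$, both of which you correctly identified as essential to cover all $r\geq 1$. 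What your route buys is a more transparent display of the gating--expert interaction as the source of the degeneracy; what it costs is the uniform second-order Taylor remainder bound over the input domain (your acknowledged main obstacle, which for $\relu$ additionally requires $b^*_1\neq 0$ so that $\varphi$ is smooth near the expansion point --- an assumption the paper's proof also makes implicitly through its use of $\varphi^{(\alpha)}(b^*_1)$). Both arguments lean on an unproven quasi-triangle inequality for $\mathcal{D}_{2,r}$ at the final testing-reduction step, so that is not a point of difference.
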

See Appendix~\ref{appendix:over_singular_activation_experts} for the proof details. 
We highlight some important implications of Theorem~\ref{theorem:over_singular_activation_experts}.

\textbf{(i)} The estimation rates for the parameters $\beta^*_{1j}$, $a^*_{j}$ and $b^*_{j}$ are slower than $\mathcal{O}_{P}(n^{-1/2r})$, for any $r\geq 1$. This means that they are slower than any polynomial rates, and could be of order $\mathcal{O}_{P}(1/\log(n))$. 

\textbf{(ii)} Using the same reasoning described after  equation~\eqref{eq:expert_rate}, we have
\begin{align}
    \label{eq:activation_bound}
    \sup_{x} |\varphi((\widehat{a}^n_i)^{\top}x+\widehat{b}^n_i)-\varphi((a^*_j)^{\top}x+b^*_j)|\leq L_2\cdot(\|\widehat{a}^n_i-a^*_j\|+|\widehat{b}^n_i-b^*_j|),
\end{align}
where $L_2>0$ is a Lipschitz constant. As a consequence, the rates for estimating experts $\varphi((a^*_j)^{\top}x+b^*_j)$ are no better than those for estimating the parameters $a^*_{j}$ and $b^*_{j}$, and could also be as slow as $\mathcal{O}_{P}(1/\log(n))$. This result suggests that all the expert parameters $a^*_{1},\ldots,a^*_{k_*}$ should be different from $\zerod$ to hope for fast convergence rates. In other words, every expert of the form $\varphi(a^{\top}x+b)$ in the MoE model should vary with the input value. 

\textbf{Scenario II.} Now, we consider the second scenario when $\varphi$ is a polynomial of the form $\varphi(z)=z^p$, for $p\in\mathbb{N}$. For simplicity, we will only focus on the setting of $p=1$, i.e., $h(x,\eta^*_{j})=(a^*_{j})^{\top}x+b^*_{j}$; the case of $p > 1$ can be argued in a very similar fashion. The structure of the polynomial experts leads to a non-linear interactions among the parameters, expressed through the PDE
\begin{align}
    \label{eq:PDE_polynomial}
    \frac{\partial^2 F}{\partial\beta_{1}\partial b}(x;\beta^*_{1i},a^*_{i},b^*_{i})=\frac{\partial^2 F}{\partial a\partial\beta_0}(x;\beta^*_{1i},a^*_{i},b^*_{i}),
\end{align}
where $F(x;\beta_1,\beta_0,a,b):=\sigma(x;\beta_{1},\beta_{0})(a^{\top}x+b)$. This interaction is the reason why the polynomial experts are not strongly identifiable. Ultimately, this dependence yields slow rates of convergence for expert estimation, just like in Scenario I, as revealed by the next result, which deploys the Voronoi loss $\mathcal{D}_{2,r}$ of equation~\eqref{eq:over_activation_loss}. 
\begin{theorem}
    \label{theorem:linear_experts}
    Suppose that the experts take the form $(a^{\top}x+b)^p$, for some $p\in\mathbb{N}$. Then, under the Scenario II and for any $r\geq 1$, 
    \begin{align*}
        \inf_{\widetilde{G}_n\in\mathcal{G}_{k}(\Theta)}\sup_{G\in\mathcal{G}_{k}(\Theta)\setminus\mathcal{G}_{k_*-1}(\Theta)}\bbE_{f_{G}}[\mathcal{D}_{2,r}(\widetilde{G}_n,G)]\gtrsim n^{-1/2},
    \end{align*}
   where $\bbE_{f_{G}}$ indicates the expectation taken w.r.t the product measure with $f^n_{G}$.
\end{theorem}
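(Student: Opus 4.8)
The plan is to prove the minimax lower bound by Le Cam's two-point method, following the same template as the proof of Theorem~\ref{theorem:over_singular_activation_experts} for Scenario~I but now driving the degeneracy with the second-order relation~\eqref{eq:PDE_polynomial} in place of the first-order relation~\eqref{eq:regime1_PDE}. First I would bound the supremum over $G$ from below by the maximum over a two-point set $\{G_*,G_n\}$, where $G_n\in\mathcal{G}_k(\Theta)\setminus\mathcal{G}_{k_*-1}(\Theta)$ is a carefully chosen sequence with $G_n\to G_*$. A standard Le Cam argument then gives
\[
\inf_{\widetilde{G}_n\in\mathcal{G}_k(\Theta)}\sup_{G}\bbE_{f_G}[\mathcal{D}_{2,r}(\widetilde{G}_n,G)]\gtrsim \mathcal{D}_{2,r}(G_n,G_*)\bigl(1-V_n\bigr),
\]
where $V_n$ is the total variation distance between the $n$-fold products of the laws induced by $f_{G_*}$ and $f_{G_n}$. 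For the Gaussian regression model~\eqref{eq:moe_regression_model}, $V_n$ is controlled through the Hellinger affinity by $n\,\normf{f_{G_n}-f_{G_*}}^2$, up to a constant depending only on $\nu$. Hence it suffices to construct $G_n$ such that (a)~$\normf{f_{G_n}-f_{G_*}}^2\lesssim 1/n$, so that $V_n$ stays bounded away from $1$, and (b)~$\mathcal{D}_{2,r}(G_n,G_*)\gtrsim n^{-1/2}$.

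The heart of the proof is the construction of $G_n$, which exploits the parameter interaction~\eqref{eq:PDE_polynomial}. I would fit a single over-specified atom $(\beta^*_{11},a^*_1,b^*_1)$ by a small cluster of perturbed components, leave the remaining true atoms untouched, and match the mixing weights so that the weight term of $\mathcal{D}_{2,r}$ vanishes. Writing each perturbation as a fixed multiple of one scalar $\epsilon_n\to 0$, I would Taylor expand $F(x;\beta_1,\beta_0,a,b)=\sigma(x,\beta_1,\beta_0)(a^\top x+b)$ about the true parameters and organize $f_{G_n}(x)-f_{G_*}(x)$ as a sum of partial derivatives of $F$ with coefficients that are polynomials in $\epsilon_n$. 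Because the experts are linear and the gate satisfies $\partial_{\beta_1}\sigma=x\,\partial_{\beta_0}\sigma$, the partial derivatives of $F$ are linearly dependent---relation~\eqref{eq:PDE_polynomial} being one such instance---so the homogeneous linear system demanding that the Taylor coefficients of orders $1,\dots,r-1$ vanish admits a nontrivial solution. Choosing the perturbation directions along such a solution makes the leading surviving term of order $\epsilon_n^{r}$, whence $\normf{f_{G_n}-f_{G_*}}\lesssim\epsilon_n^{r}$, while the surviving $r$-th power terms give $\mathcal{D}_{2,r}(G_n,G_*)\asymp\epsilon_n^{r}$.

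Granting the construction, the balancing is immediate: taking $\epsilon_n\asymp n^{-1/(2r)}$ yields $\mathcal{D}_{2,r}(G_n,G_*)\asymp n^{-1/2}$ and $\normf{f_{G_n}-f_{G_*}}^2\lesssim\epsilon_n^{2r}\asymp 1/n$, so that (a) and (b) hold simultaneously and the Le Cam bound delivers $\inf_{\widetilde{G}_n}\sup_{G}\bbE_{f_G}[\mathcal{D}_{2,r}(\widetilde{G}_n,G)]\gtrsim n^{-1/2}$ for every $r\geq 1$. The case $p>1$ is identical after replacing~\eqref{eq:PDE_polynomial} by its degree-$p$ analogue.

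The step I expect to be the main obstacle is the construction itself. One must verify that, for each fixed $r$, the coefficient system can indeed be solved so that all Taylor coefficients below order $r$ vanish while at least one surviving parameter perturbation remains of exact order $\epsilon_n$ (so that the loss is not annihilated together with the regression difference), and that the order-$r$ derivative functions are sufficiently linearly independent to certify $\mathcal{D}_{2,r}(G_n,G_*)\asymp\epsilon_n^r$. This solvability for arbitrary $r$ is precisely what the failure of strong identifiability---encoded by~\eqref{eq:PDE_polynomial}---provides, and it mirrors the ``slower than any polynomial'' phenomenon reported in Table~\ref{table:expert_rates}. The remaining care lies in bounding the $L_2(\mu)$-norm of the Taylor remainder (rather than a mere pointwise estimate) to secure $\normf{f_{G_n}-f_{G_*}}\lesssim\epsilon_n^{r}$, and in checking that the perturbed $G_n$ stays within the compact class $\mathcal{G}_k(\Theta)\setminus\mathcal{G}_{k_*-1}(\Theta)$.
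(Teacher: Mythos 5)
Your overall strategy coincides with the paper's: a two\-/point Le Cam argument (packaged in the paper as Lemma~\ref{lemma:minimax_lower_bound}, whose hypothesis is exactly that the ratio $\normf{f_{G}-f_{G_*}}/\mathcal{D}_{2,r}(G,G_*)$ can be driven to zero along a sequence $G_n\to G_*$), followed by an explicit construction of such a degenerate sequence. The gap is in the one step you yourself flag as the obstacle: the construction. You propose to kill the Taylor coefficients of orders $1,\dots,r-1$ by solving a ``homogeneous linear system'' whose solvability you attribute to the dependence relation~\eqref{eq:PDE_polynomial}. That justification does not go through as stated: a single second-order linear dependence among the derivatives of $F$ does not imply that the full system of vanishing conditions up to order $r-1$ --- whose number of equations grows with $r$ --- is solvable with a cluster of bounded size $k-k_*+1$ and perturbations that are fixed multiples of one scalar $\epsilon_n$; a count of free parameters against constraints makes that system overdetermined for large $r$, so ``solvability for arbitrary $r$'' needs an actual argument, not an appeal to the failure of strong identifiability. (A secondary point: your scheme yields only $\normf{f_{G_n}-f_{G_*}}\lesssim\epsilon_n^{r}\asymp\mathcal{D}_{2,r}(G_n,G_*)$, a bounded rather than vanishing ratio; this still suffices for the two-point bound with $\epsilon_n^r\asymp n^{-1/2}$, but it does not verify the hypothesis of Lemma~\ref{lemma:minimax_lower_bound} as written, and you must also check that the surviving perturbations are not simultaneously annihilated inside $\mathcal{D}_{2,r}$.)

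The paper's construction sidesteps all of this by exploiting the exact linearity of the expert in $b$. Take two atoms with $\beta^n_{11}=\beta^n_{12}=\beta^*_{11}=\zerod$, $a^n_1=a^n_2=a^*_1$, $b^n_1=b^*_1+1/n$, $b^n_2=b^*_1-1/n$, equal sigmoid weights whose sum is $\frac{1}{1+\exp(-\beta^*_{01})}+n^{-(r+1)}$, and all remaining atoms equal to the true ones. Because $\varphi(z)=z$, the symmetric perturbation of $b$ cancels identically --- at every order, not merely up to order $r-1$ --- so $f_{G_n}(x)-f_{G_*}(x)=n^{-(r+1)}\,[(a^*_1)^{\top}x+b^*_1]$ exactly, with no Taylor remainder to control, while $\mathcal{D}_{2,r}(G_n,G_*)=n^{-(r+1)}+2n^{-r}\asymp n^{-r}$ by~\eqref{eq:over_activation_loss}. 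The ratio is $\mathcal{O}(n^{-1})\to0$ and Lemma~\ref{lemma:minimax_lower_bound} gives the claim (the paper treats $p=1$ explicitly and asserts $p>1$ is analogous). If you want to keep your framing, the ``nontrivial solution'' you are after is precisely this mean-preserving perturbation of $b$ alone; what makes it kill all orders simultaneously is the polynomial structure of the expert, not the relation~\eqref{eq:PDE_polynomial} per se. To repair your write-up, replace the order-by-order solvability claim with this explicit two-atom cancellation.
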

The proof of Theorem~\ref{theorem:linear_experts} is in Appendix~\ref{appendix:linear_experts}.  To sum up, when either the experts are independent of the input or they take a polynomial form, their estimation rates could be as slow as $\mathcal{O}_P([\log(n)/n]^{\frac{1}{2}})$.

\subsection{Regime 2 of Gating Parameters}
\label{sec:regime_2}

Recall that under the Regime 2, at least one among the over-specified parameters $\beta^*_{1i}$ is different from $\zerod$. As discussed in Section~\ref{sec:regresstion_estimation}, the least squares regression estimator $f_{\widehat{G}_n}$ in this case converges to a regression function $f_{\overline{G}}$, where $\overline{G}\in\overline{\mathcal{G}}_k(\Theta):=\argmin_{G\in\mathcal{G}_k(\Theta)\setminus\mathcal{G}_{k_*}(\Theta)}\normf{f_{G}-f_{G_*}}$. That is, the estimators of the parameters specifying $f_{\widehat{G}_n}$ converge to the parameters of $f_{\overline{G}}$. WLOG, we may assume that $\overline{G}:=\sum_{i=1}^{k}\frac{1}{1+\exp(-\bar{\beta}_{0i})}\delta_{(\bar{\beta}_{1i},\bar{\eta}_i)}$.

Similarly to Regime 1, we also provide in Definition~\ref{def:exact_general_conditions} a \emph{weak identifiability} condition to characterize those expert functions that will have fast estimation rates under the Regime 2. Weakly identifiable experts are required to satisfy only a subset of the conditions imposed to strongly identifiable experts.
\begin{definition}[Weak identifiability]
    \label{def:exact_general_conditions}
    An expert function $x \mapsto h(x,\eta)$ is called weakly identifiable if it is twice differentiable w.r.t its parameter $\eta$ for $\mu$-almost all $x$ and, for any positive integer $\ell$ and any pair-wise distinct choices of parameters $\{ (\beta_{0i},\beta_{1i},\eta_i) \}_{i=1}^{\ell}$, the functions in the class
\[
\Bigg\{x \mapsto  \frac{\partial F}{\partial\beta_1^{\alpha_1}\partial\beta_0^{\alpha_2}\partial\eta^{\alpha_3}}(x,\beta_{1i},\beta_{0i},\eta_i):i\in[\ell],(\alpha_1,\alpha_2,\alpha_3)\in\mathbb{N}^d\times\mathbb{N}\times\mathbb{N}^q,|\alpha_1|+\alpha_2+|\alpha_3|=1\Bigg\}
\]
are linearly independent, for $\mu$-almost all $x$.
\end{definition}

\textbf{Example.} Let us take an expert network $h(x,(a,b))=\varphi(a^{\top}x+b)$ as an example. It can be checked that if $a\neq\zerod$ and the activation function $\varphi$ is a $\relu$ or $\gelu$ or a polynomial, then the expert $h(x,(a,b))$ is weakly identifiable. Conversely, if $a=\zerod$, i.e. the expert does not depend on the input, then the weak identifiability condition is not met, regardless of the choice of the activation.
 
We now describe the convergence rates for expert estimation under Regime 2. Since the analysis of input-independent experts can be done similarly to Theorem~\ref{theorem:over_singular_activation_experts}, we will focus only on weakly identifiable experts. As it turned out, the appropriate Voronoi loss function is given by
\begin{align}
\label{eq:exact_general_loss}
\mathcal{D}_{3}(G,\overline{G}):=\sum_{j=1}^{k}\sum_{i\in\mathcal{A}_j}\Big[|\beta_{0i}-\bar{\beta}_{0j}|+\|\beta_{1i}-\bar{\beta}_{1j}\|+\|\eta_i-\bar{\eta}_j\|\Big].
\end{align}
\begin{theorem}
    \label{theorem:exact_LSE}
    Let $h(x,\eta)$ be a weakly identifiable expert. Then,  for any mixing measure $G\in\mathcal{G}_{k}(\Theta)$, 
    \begin{align*}
        \inf_{\overline{G}\in\overline{\mathcal{G}}_k(\Theta)}\normf{f_{G}-f_{\overline{G}}}/\mathcal{D}_{3}(G,\overline{G})>0.
    \end{align*}
     As a consequence, we obtain that  $\inf_{\overline{G}\in\overline{\mathcal{G}}_k(\Theta)}\mathcal{D}_{3}(\widehat{G}_n,\overline{G})=\mathcal{O}_{P}([\log(n)/n]^{\frac{1}{2}})$.
\end{theorem}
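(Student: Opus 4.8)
The plan is to derive the theorem from the two standard ingredients of this line of work: a deterministic lower bound of the form $\normf{f_{G}-f_{\overline{G}}}\gtrsim\mathcal{D}_{3}(G,\overline{G})$ valid for every fixed $\overline{G}\in\overline{\mathcal{G}}_k(\Theta)$, combined with the parametric regression rate of Corollary~\ref{corollary:over_regression_regime2}, which gives $\inf_{\overline{G}}\normf{f_{\widehat{G}_n}-f_{\overline{G}}}=\mathcal{O}_{P}([\log(n)/n]^{\frac12})$ and hence the claimed bound on $\sup_{\overline{G}}\mathcal{D}_{3}(\widehat{G}_n,\overline{G})$. Since every element of $\overline{\mathcal{G}}_k(\Theta)$ is an $L_2(\mu)$-projection of $f_{G_*}$ onto the model class, the $\inf$ on the left and the $\sup$ on the right can be decoupled, so it suffices to prove the pointwise inequality for an arbitrary fixed $\overline{G}$. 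As usual, I would first reduce to a \emph{local} statement: there exist $\varepsilon,C>0$ with $\normf{f_{G}-f_{\overline{G}}}\geq C\,\mathcal{D}_{3}(G,\overline{G})$ whenever $\mathcal{D}_{3}(G,\overline{G})<\varepsilon$. The complementary \emph{global} regime $\mathcal{D}_{3}(G,\overline{G})\geq\varepsilon$ follows from a separate compactness-and-contradiction argument, using that $f_{G}=f_{\overline{G}}$ in $L_2(\mu)$ forces $G$ to coincide with $\overline{G}$ up to the conventions in $\overline{\mathcal{G}}_k(\Theta)$.

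For the local bound I would argue by contradiction. Suppose it fails; then there is a sequence $G_n\to\overline{G}$ with $D_n:=\mathcal{D}_{3}(G_n,\overline{G})\to 0$ and $\normf{f_{G_n}-f_{\overline{G}}}/D_n\to 0$. Because $\overline{G}$ has exactly $k$ pairwise-distinct atoms and $G_n\in\mathcal{G}_{k}(\Theta)$ approaches it, for $n$ large each Voronoi cell $\mathcal{A}_j$ contains exactly one atom of $G_n$, so writing $F(x,\beta_1,\beta_0,\eta)=\sigma(x,\beta_1,\beta_0)h(x,\eta)$ I can expand
\begin{align*}
f_{G_n}(x)-f_{\overline{G}}(x)=\sum_{j=1}^{k}\Big[\Delta\beta_{1j}^{\top}\tfrac{\partial F}{\partial\beta_1}+\Delta\beta_{0j}\tfrac{\partial F}{\partial\beta_0}+\Delta\eta_{j}^{\top}\tfrac{\partial F}{\partial\eta}\Big](x,\bar\beta_{1j},\bar\beta_{0j},\bar\eta_j)+R_n(x),
\end{align*}
where $\Delta\beta_{1j},\Delta\beta_{0j},\Delta\eta_j$ are the parameter gaps of the atom in $\mathcal{A}_j$ and $R_n$ collects the second-order Taylor remainder. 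Using twice-differentiability of $h$ together with compactness of $\Theta$ and boundedness of the experts (Assumptions (A.1)--(A.2)), the remainder is uniformly controlled by $\sum_j\|\Delta\omega_j\|^2\leq(\max_j\|\Delta\omega_j\|)\,D_n=o(D_n)$, so $R_n/D_n\to 0$ in $L_2(\mu)$.

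I would then normalize the first-order coefficients by $D_n$. Since $D_n\asymp\sum_j(\|\Delta\beta_{1j}\|+|\Delta\beta_{0j}|+\|\Delta\eta_j\|)$, the normalized coefficients have total magnitude bounded below by a positive constant and lie in a compact set; passing to a subsequence they converge to limits that are not all zero, while $\bar\omega_j\to\bar\omega_j^{*}$ with the $\bar\omega_j^{*}$ pairwise distinct. Dividing the displayed identity by $D_n$ and letting $n\to\infty$, the hypothesis $\normf{f_{G_n}-f_{\overline{G}}}/D_n\to 0$ forces the limiting linear combination
\begin{align*}
\sum_{j=1}^{k}\Big[(\tilde a_{1j})^{\top}\tfrac{\partial F}{\partial\beta_1}+\tilde a_{0j}\tfrac{\partial F}{\partial\beta_0}+(\tilde a_{\eta j})^{\top}\tfrac{\partial F}{\partial\eta}\Big](x,\bar\beta_{1j}^{*},\bar\beta_{0j}^{*},\bar\eta_j^{*})=0
\end{align*}
for $\mu$-almost every $x$. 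This is precisely a nontrivial linear dependence among the first-order derivatives of $F$ at distinct parameters, which is exactly what the weak identifiability of $h$ (Definition~\ref{def:exact_general_conditions}) rules out; hence all limiting coefficients vanish, contradicting that their total magnitude is one.

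I expect the main obstacle to be the bookkeeping around the non-uniqueness of $\overline{G}$, i.e. justifying the decoupling of $\inf_{\overline{G}}$ and $\sup_{\overline{G}}$ and handling the possibility that $\widehat{G}_n$ converges to different representatives in $\overline{\mathcal{G}}_k(\Theta)$. The analytic heart, namely the clean cancellation of the second-order remainder and the direct appeal to weak identifiability, is comparatively routine once the exact-specified structure (one fitted atom per Voronoi cell) is in place; this is precisely why only first-order differences appear in $\mathcal{D}_{3}$ and why the resulting rate is parametric rather than of order $n^{-1/4}$.
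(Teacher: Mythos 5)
Your proposal follows essentially the same route as the paper's proof: a local--global reduction, a contradiction argument with a sequence $G_n\to\overline{G}$, the observation that each Voronoi cell contains exactly one atom so that a first-order Taylor expansion of $F=\sigma h$ suffices, normalization of the coefficients by $\mathcal{D}_3(G_n,\overline{G})$ (the paper phrases this via the maximal ratio $m_n$ and Fatou's lemma, but it is the same extraction of a nontrivial limiting linear combination), and a final appeal to weak identifiability to force all limiting coefficients to vanish. The argument is correct and matches the paper's proof in all essential respects.
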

The proof of Theorem~\ref{theorem:exact_LSE} is in Appendix~\ref{appendix:exact_LSE}. As a direct consequence of the bound and the definition of $\mathcal{D}_3$, the convergence rates of the estimators $\hat{\beta}^n_{0i}$, $\hat{\beta}^n_{1i}$ and $\hat{\eta}^n_i$ are of the same of parametric order $\mathcal{O}_P([\log(n)/n]^{\frac{1}{2}})$. Furthermore, by equation~\eqref{eq:expert_rate}, we also establish that the convergence rate of the expert estimator $h(x,\hat{\eta}^n_i)$ is also of order $\mathcal{O}_P([\log(n)/n]^{\frac{1}{2}})$. Comparing these estimation rates with those established by \cite{nguyen2024squares} assuming instead softmax gating, there are two main observations:

\textbf{(i)} When the expert is a neural network with $\relu$ or $\gelu$ activation, the above parametric expert estimation rate obtained using the sigmoid gate is faster than the rate guranteed by the softmax gate, which could be of order $\mathcal{O}_P([\log(n)/n]^{\frac{1}{4}})$ (see [Theorem 3.2, \cite{nguyen2024squares}]).

\textbf{(ii)} When the activation is a polynomial, the gap between  expert estimation rates when using the sigmoid gate versus when using the softmax gate becomes even more dramatic: $\mathcal{O}_P([\log(n)/n]^{\frac{1}{2}})$ compared to $\mathcal{O}_P(1/\log(n))$ (see Theorem 4.6 in \cite{nguyen2024squares}).

The above remarks highlight the considerable benefits of deploying sigmoid gating over softmax gating in MoE models: provably faster estimation rates  not only for expert networks with popular activations like $\relu$ and $\gelu$, but also for those with polynomial activation. 

\section{Numerical Experiments}
\label{sec:experiments}
In this section, we perform some numerical experiments to empirically demonstrate our claim that the sigmoid gating is more sample efficient than the softmax gating. 

From Table~\ref{table:expert_rates}, it can be seen that the sigmoid gating shares the same expert estimation rates as the softmax gating under the Regime 1. However, the former gating outperforms the latter under the Regime 2, particularly for ReLU experts and polynomial experts. Therefore, we will consider those experts under the Regime 2 in our subsequent experiments. 

\textbf{Data generation.} In particular, we generate the data by first sampling $X_i \sim \mathrm{Uniform}([-1, 1]^d)$ for $i = 1, \ldots, n$. Then, we generate $Y_i$ according to the following model: 
\begin{align}
    Y_{i} = g_{G_{*}}(X_{i}) + \epsilon_{i}, \quad i=1,\ldots,n, \label{eq:synthetic_data_1}
\end{align}
where the regression function $g_{G_{*}}(\cdot)$ take the form of a softmax gating MoE:
\begin{align}
    \label{eq:synthetic_model_1}
     g_{G_{*}}(x) := \sum_{i=1}^{k_*} \softmax((\beta^*_{1i})^{\top}x+\beta^*_{0i})\cdot \phi\left((a_i^*)^\top x + b_{i}^{*}\right),
\end{align}
The input data dimension is $d = 32$. We employ $k_* = 8$ experts of the form $\phi(a^{\top}x+b)$, where $\phi$ is either the identity function or the $\relu$ function. The variance of Gaussian noise $\epsilon_i$ is $\nu = 0.01$. 

\textbf{Experimental setup.} We summarize the choices of the ground-truth parameters $\beta^*_{0i}$, $\beta^*_{1i}$, $a_i^*$ and $b_i^*$ for $1\leq i\leq 8$ in Table~\ref{tab:true_params_main}, which satisfies the condition of the Regime 2.

\begin{table}[t!]
    \centering
    \caption{True Parameters for Gating and Experts. The variance for the gating parameters is $\nu_g = 0.01/d$ and for the expert parameters is $\nu_e = 1/d$, where $d=32$.}
    \label{tab:true_params_main}
    \begin{tabular}{l l l l}
    \toprule
    \multicolumn{2}{c}{\textbf{Gating parameters}} & \multicolumn{2}{c}{\textbf{Expert parameters}}\\
    \midrule
    $\begin{cases}
        \beta_{1i}^{*} \sim \mathcal{N}(\zerod, \nu_g I_{d}) & 1 \le i \le 7\\
        \beta_{1i}^{*} = \zerod & i = 8
    \end{cases}$
    & $\beta_{0i}^{*} \sim \mathcal{N}(0, \nu_g)$ & 
    $a_{i}^{*} \sim \mathcal{N}(\zerod, \nu_e I_d)$ & $b_{i}^{*} \sim \mathcal{N}(0, \nu_e)$
    \\
    \bottomrule
    \end{tabular}
\end{table}

\textbf{Training procedure.} For each sample size $n$, spanning from $10^3$ to $10^5$, we perform 20 experiments. In every experiment, we employ $k=k_*+1=9$ fitted experts, and the parameters initialization for the gating's and experts' parameters are adjusted to be near the true parameters, minimizing potential instabilities from the optimization process. Subsequently, we execute the stochastic gradient descent algorithm across $10$ epochs, employing a learning rate of $\eta = 0.1$ to fit a model to the synthetic data. For each experiment, we calculate the Voronoi losses for every model and report the mean values for each sample size in Figure~\ref{fig:sigmoid_softmax}. Error bars representing two standard deviations are also shown. 

\textbf{Results.} 
In Figure~\ref{fig:sigmoid_softmax}, when employing the $\relu$ experts, that is, $\phi$ is the $\relu$ function, the Voronoi loss for the sigmoid gating approaches zero at the rate of order $\mathcal{O}(n^{-0.51})$, which nearly matches our theoretical results in Theorem~\ref{theorem:exact_LSE}. Meanwhile, the loss for the softmax gating converges to zero at the slower rate $\mathcal{O}(n^{-0.24})$. On the other hand, when using the linear experts, that is, $\phi$ is the identity function, the vanishing rate of the Voronoi loss associated with the sigmoid gating is $\mathcal{O}(n^{-0.40})$, while that for the softmax gating is significantly slower, standing at $\mathcal{O}(n^{-0.11})$. These observations empirically shows that the sigmoid gating is more sample efficient than the softmax gating.

\begin{figure}[t!]
    \centering
    \begin{subfigure}{.4\textwidth}
        \centering
        \includegraphics[scale = .4]{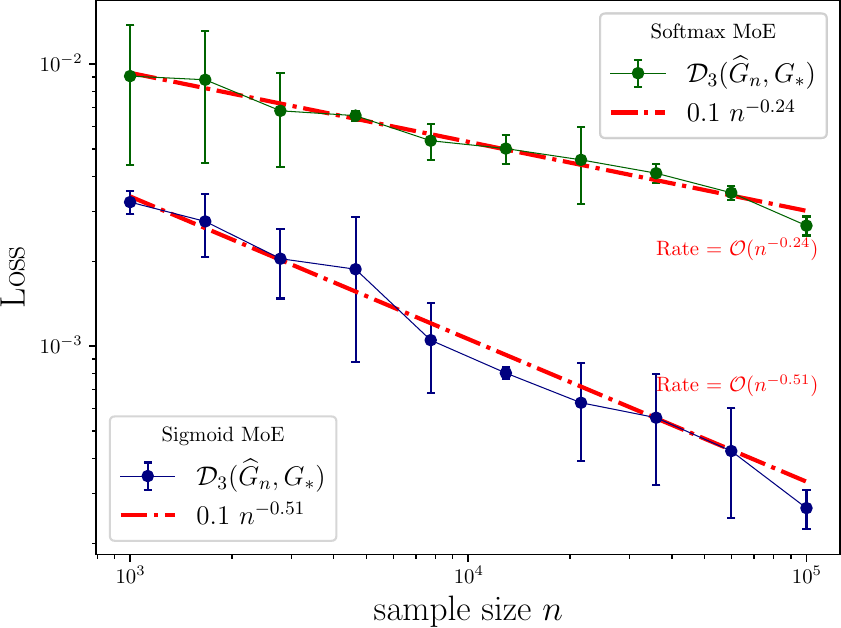}
        \caption{$\relu$ Expert}
        \label{fig:sigmoid_relu}
    \end{subfigure}
    \hspace{0.6cm}
    \begin{subfigure}{.4\textwidth}
        \centering
        \includegraphics[scale = .4]{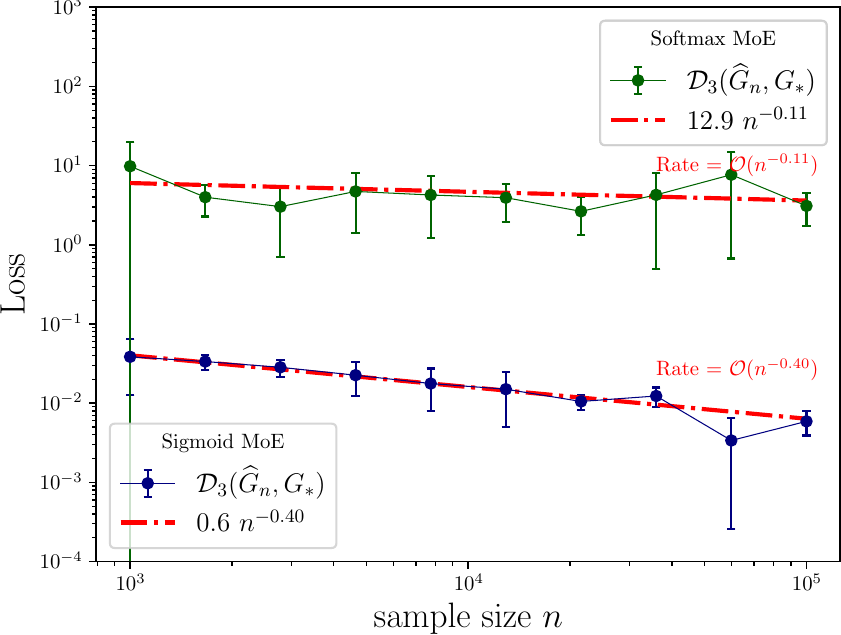}
        \caption{Linear Expert}	
        \label{fig:sigmoid_linear}
    \end{subfigure}
    \caption{Log-log scaled plots displaying the empirical averages of the Voronoi losses when using the sigmoid gating (blue line) versus when using the sofmax gating (green line) under the same data. The red dash-dotted lines illustrate the fitted lines for determining the empirical convergence rates.}
    \label{fig:sigmoid_softmax}
\end{figure}
\section{Discussion}
\label{sec:conclusion}
In this paper, we carry out a convergence analysis of least squares expert estimation under MoE models with the sigmoid gating, which was found empirically to be robust to the representation collapse issue and to have a favorable performance in the MoE applications. We demonstrate that under both the gating regimes, the sigmoid gating requires a smaller sample size than the softmax gating to reach the same expert estimation error: that is, the sigmoid gating is more sample efficient than its softmax counterpart. Furthermore, we also verify that experts formulated as feed-forward networks with popular activation functions such as $\relu$ and $\gelu$ are more compatible with the sigmoid gating than other types of expert functions.

\textbf{Practical implications.} Our theoretical findings support the following conclusions, which have considerable practical relevance. 

\textbf{(I.1) The sigmoid gate is more sample efficient than the softmax gate for expert estimation.} It can be seen from Table~\ref{table:expert_rates} that while the expert estimation rates obtained when using the sigmoid gate match those attained when using the softmax gate under the Regime 1, the former are totally faster than the latter under the Regime 2. Notably, Regime 2 is closer to practice since the gating values under this regime hinge upon the input while those under the Regime 1 are input-independent. As a consequence, we can claim that the sigmoid gate is more sample efficient than the softmax gate from the perspective of the expert estimation problem.

\textbf{(I.2) The sigmoid gate is compatible with a broader class of experts than the softmax gate.} It follows from the expert characterization for Regime 1 (resp. Regime 2) in Definition~\ref{def:over_general_conditions} (resp. Definition~\ref{def:exact_general_conditions}) that formulating expert functions as feed-forward networks \cite{vaswani2017attention} with commonly used activation functions such as $\relu$ and $\gelu$ or even a polynomial activation will lead to faster expert estimation rates, and thus, require a smaller sample size to reach the same tolerance of estimating experts compared to the softmax gate. Thus, our theories indicate that the sigmoid gating is compatible with a broader class of experts than the softmax gating. This implication is particularly useful when people employ a mixture of fine-grained (shallow) expert networks \cite{he2024millionmoe}.

\textbf{Limitations and future directions.} There are two main limitations of our current analysis:

1. We assume implicitly that the ground-truth parameters are independent of the sample size. Therefore, the expert estimation rates established in the paper are point-wise rather than the desirable uniform rates. A potential approach to cope with this problem is using the techniques for deriving the uniform parameter estimation rates in mixture models \cite{do2023deviated} and in MoE models \cite{yan2024contaminated}. However, those techniques are only valid for input-independent gating MoE, and we believe that further technical tools should be developed to adapt such framework to the sigmoid gating MoE. Thus, we leave it for future development. 

2. The assumption that the true regression function belongs to the parametric class of MoE models under sigmoid gating is, of course, quite restrictive, and likely to be violated in real-world settings. This issue can be alleviated by assuming the data are sampled from a regression framework with an arbitrary regression function $g$, which is not necessarily formulated as a mixture of experts. Under that setting, the least squares estimator $\widehat{G}_n$ converges to $\widetilde{G}\in\argmin_{G\in\mathcal{G}_k(\Theta)}\normf{f_{G}-g}$. Nevertheless, it requires the comprehensive knowledge of the universal approximation power of the sigmoid function, which has been slightly explored in \cite{barron1993sigmoid}, to determine the expert estimation rate. Therefore, we leave this potential direction for future work.


\section*{Acknowledgements}
NH acknowledges support from the NSF IFML 2019844 and the NSF AI Institute for Foundations of Machine Learning. 

\bibliography{references}
\bibliographystyle{abbrv}

\newpage
\appendix
\centering
\textbf{\Large{Supplement to
``Sigmoid Gating is More Sample Efficient than
Softmax Gating in Mixture of Experts''}}

\justifying
\setlength{\parindent}{0pt}
\textbf{}\\
In this supplementary material, we present proofs for the main results of the paper in Appendix~\ref{appendix:main_results}, while we study the identifiability of the sigmoid gating mixture of experts (MoE) in Appendix~\ref{appendix:identifiability}. Lastly, we provide additional numerical experiments to verify our theory in Appendix~\ref{appendix:experiments}.

\section{Proof of Main Results}
\label{appendix:main_results}
In this appendix, we provide proofs for the theoretical results introduced in Section~\ref{sec:regresstion_estimation} and Section~\ref{sec:over_specified} of the paper.

\subsection{Proof of Theorem~\ref{theorem:over_regression}}
\label{appendix:over_regression}
To streamline the arguments for this proof, we need to define some necessary notations. First of all, we let $\mathcal{F}_k(\Theta):=\{f_{G}(x):G\in\mathcal{G}_{k}(\Theta)\}$ stand for the set of all regression functions in $\mathcal{G}_k(\Theta)$.
Next, for each $\delta>0$, we define the $L^{2}(\mu)$-ball centered around the regression function $f_{G_*}(x)$ and intersected with the set $ \mathcal{F}_k(\Theta)$ as
\begin{align*}   \mathcal{F}_k(\Theta,\delta):=\left\{f \in \mathcal{F}_k(\Theta): \|f -f_{G_*}\|_{L^{2}(\mu)} \leq\delta\right\}.
\end{align*}
Furthermore, the size of the above ball is captured by the following integral as suggested in \cite{vandeGeer-00}.
\begin{align}
    \label{eq:bracket_size}
    \mathcal{J}_B(\delta, \mathcal{F}_k(\Theta,\delta)):=\int_{\delta^2/2^{13}}^{\delta}H_B^{1/2}(t, \mathcal{F}_k(\Theta,t),\|\cdot\|_{L^{2}(\mu)})~\dint t\vee \delta,
\end{align}
in which $H_B(t, \mathcal{F}_k(\Theta,t),\|\cdot\|_{L^{2}(\mu)})$ denotes the bracketing entropy \cite{vandeGeer-00} of $ \mathcal{F}_k(\Theta,t)$ under the $L^{2}(\mu)$-norm, and $t\vee\delta:=\max\{t,\delta\}$. By arguing in a similar fashion to Theorem 7.4 and Theorem 9.2 in \cite{vandeGeer-00} with adapted notations to this work, we achieve the following lemma:
\begin{lemma}
    \label{lemma:density_rate}
    Let $\Psi(\delta)\geq \mathcal{J}_B(\delta, \mathcal{F}_k(\Theta,\delta))$ such that $\Psi(\delta)/\delta^2$ is a non-increasing function of $\delta$. Then, for some universal constant $c$ and for some sequence $(\delta_n)$ that satisfies $\sqrt{n}\delta^2_n\geq c\Psi(\delta_n)$, the following holds for any $\delta\geq \delta_n$:
    \begin{align*}
        \mathbb{P}\Big(\|f_{\widehat{G}_n} - f_{G_*}\|_{L^{2}(\mu)} > \delta\Big)\leq c \exp\left(-\frac{n\delta^2}{c^2}\right).
    \end{align*}
\end{lemma}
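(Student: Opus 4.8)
The plan is to adapt the empirical-process machinery for least squares M-estimation from~\cite{vandeGeer-00}---specifically the arguments underlying Theorem~7.4 and Theorem~9.2 therein---to the sigmoid-gated MoE class $\mathcal{F}_k(\Theta)$. The starting point is the \emph{basic inequality}. Since $\widehat{G}_n$ minimizes the empirical squared loss in~\eqref{eq:least_squared_estimator} and the data satisfy $Y_i = f_{G_*}(X_i) + \epsilon_i$, expanding the squares at $\widehat{G}_n$ and at $G_*$ and cancelling the common $\sum_i \epsilon_i^2$ term yields
\begin{align*}
\frac{1}{n}\sum_{i=1}^n \big(f_{\widehat{G}_n}(X_i)-f_{G_*}(X_i)\big)^2 \leq \frac{2}{n}\sum_{i=1}^n \epsilon_i\big(f_{\widehat{G}_n}(X_i)-f_{G_*}(X_i)\big).
\end{align*}
The right-hand side is a Gaussian-weighted empirical process indexed by $f\in\mathcal{F}_k(\Theta)$, so the core task is to bound its increments uniformly over the shells on which $\normf{f-f_{G_*}}$ is of a prescribed order.

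First I would invoke assumptions (A.1)--(A.2) to guarantee that every $f\in\mathcal{F}_k(\Theta)$ is uniformly bounded $\mu$-almost surely; this makes the process increments sub-Gaussian and permits comparison between the empirical and the population $L_2(\mu)$ norms on a high-probability event. The complexity of the class is then captured by the bracketing entropy $H_B\big(t,\mathcal{F}_k(\Theta,t),\|\cdot\|_{L_2(\mu)}\big)$ and the associated entropy integral $\mathcal{J}_B(\delta,\mathcal{F}_k(\Theta,\delta))$ of~\eqref{eq:bracket_size}. A chaining argument then bounds the supremum of the weighted process over the $\delta$-ball $\mathcal{F}_k(\Theta,\delta)$ by a constant multiple of $\mathcal{J}_B(\delta,\mathcal{F}_k(\Theta,\delta))/\sqrt{n}$, up to lower-order terms.

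The tail bound is obtained through a \emph{peeling} device. I would decompose the event $\{\normf{f_{\widehat{G}_n}-f_{G_*}} > \delta\}$ into the union over $s\geq 1$ of the shells $\{f : 2^{s-1}\delta < \normf{f-f_{G_*}} \leq 2^s\delta\}$, apply the continuity-modulus estimate together with a Bernstein/sub-Gaussian concentration inequality on each shell, and sum the resulting geometric series. The hypotheses that $\Psi(\delta)\geq \mathcal{J}_B(\delta,\mathcal{F}_k(\Theta,\delta))$ with $\Psi(\delta)/\delta^2$ non-increasing, together with the fixed-point condition $\sqrt{n}\delta_n^2\geq c\Psi(\delta_n)$, are precisely what make the per-shell exponents balance, so that the union bound collapses into the stated estimate $c\exp(-n\delta^2/c^2)$ for every $\delta\geq\delta_n$.

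I expect the main obstacle to be the careful transfer of the van de Geer estimates---originally phrased in terms of the empirical norm arising in the basic inequality---into a clean deviation bound for the population $L_2(\mu)$ distance, while simultaneously controlling the unbounded Gaussian noise $\epsilon_i$ (handled either by a truncation step or by invoking the sub-Gaussian form of the maximal inequality). By contrast, the explicit bracketing-entropy computation and the resulting fixed point $\delta_n\asymp\sqrt{\log(n)/n}$ are not needed at this stage, since the lemma takes $\Psi$ and the sequence $(\delta_n)$ as given; the work lies entirely in reproducing the empirical-process deviation bound in the present MoE setting.
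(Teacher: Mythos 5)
Your proposal follows essentially the same route as the paper: the paper does not actually write out a proof of this lemma, but simply asserts that it follows "by arguing in a similar fashion to Theorem 7.4 and Theorem 9.2 in \cite{vandeGeer-00}," and the ingredients you sketch (basic inequality, sub-Gaussian weighted empirical process controlled via the bracketing entropy integral, peeling over shells, and the fixed-point condition $\sqrt{n}\delta_n^2 \geq c\Psi(\delta_n)$) are precisely the content of those cited results. Your identification of the empirical-to-population norm transfer as the delicate step is also consistent with why the paper invokes Theorem 7.4 in addition to Theorem 9.2.
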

\textbf{General Picture.} If we are able to demonstrate that the following bound holds for any $\varepsilon\in(0,1/2]$:
\begin{align}    
H_B(\varepsilon,\mathcal{F}_k(\Theta),\|.\|_{L^{2}(\mu)}) \lesssim \log(1/\varepsilon), \label{eq:bracket_entropy_bound}
\end{align}
then it follows that
\begin{align}
    \label{eq:bracketing_integral}
    \mathcal{J}_B(\delta, \mathcal{F}_k(\Theta,\delta))&= \int_{\delta^2/2^{13}}^{\delta}H_B^{1/2}(t, \mathcal{F}_k(\Theta,t),\normf{\cdot})~\dint t\vee \delta\nonumber\\
    &\lesssim \int_{\delta^2/2^{13}}^{\delta}\log(1/t)dt\vee\delta.
\end{align}
By choosing $\Psi(\delta)=\delta\cdot[\log(1/\delta)]^{1/2}$, then $\Psi(\delta)/\delta^2$ is a non-increasing function of $\delta$. Moreover, equation~\eqref{eq:bracketing_integral} suggests that $\Psi(\delta)\geq \mathcal{J}_B(\delta,\mathcal{F}_k(\Theta,\delta))$. Additionally, we set $\delta_n=\sqrt{\log(n)/n}$, and then get that $\sqrt{n}\delta^2_n\geq c\Psi(\delta_n)$ for some universal constant $c$. According to Lemma~\ref{lemma:density_rate}, we achieve the desired conclusion of the theorem. As a consequence, it suffices to establish the bound~\eqref{eq:bracket_entropy_bound}.

\textbf{Proof of inequality~\eqref{eq:bracket_entropy_bound}.} Note that as the expert functions are bounded, we have that $|f_{G}(x)| \leq M$ for almost every $x$, where $M>0$ is a bounded constant. 

Next, let $\tau\leq\varepsilon$ and $\{\zeta_1,\ldots,\zeta_N\}$ be the $\tau$-cover under the $L^{2}(\mu)$-norm of the set $\mathcal{F}_k(\Theta)$, where $N:={N}(\tau,\mathcal{F}_k(\Theta),\|\cdot\|_{L^{\infty}})$ stands for the $\eta$-covering number of the metric space $(\mathcal{F}_k(\Theta),\|\cdot\|_{L^{\infty}})$. Now, we consider brackets of the form $[L_i(x),U_i(x)]$, where we define
    \begin{align*}
        L_i(x):=\max\{\zeta_i(x)-\tau,0\}, \qquad U_i(x):=\max\{\zeta_i(x)+\tau, M \},
    \end{align*}
for all $i\in[N]$. Then, it can be verified that $\mathcal{F}_{k}(\Theta)\subset\cup_{i=1}^{N}[L_i(x),U_i(x)]$ and $U_i(x)-L_i(x)\leq \min\{2\tau,M\}$. Thus, we obtain that 
\begin{align*}
    \normf{U_i-L_i}^2=\int(U_i-L_i)^2\dint\mu(x)\leq\int 4\tau^2\dint\mu(x)=4\tau^2,
\end{align*}
which indicates that $\normf{U_i-L_i}\leq 2\tau$. From the definition of the bracketing entropy, we arrive at
\begin{align}
    \label{eq:bracketing_covering}
    H_B(2\tau,\mathcal{F}_{k}(\Theta),\normf{\cdot})\leq\log N=\log {N}(\tau,\mathcal{F}_k(\Theta),\|\cdot\|_{L^{\infty}}).
\end{align}
Consequently, it is sufficient to provide an upper bound for the covering number $N$. For that purpose, let us denote $\Delta:=\{(\beta_0,\beta_1)\in\mathbb{R}\times\mathbb{R}^d:(\beta_0,\beta_1,\eta)\in\Theta\}$ and $\Omega:=\{\eta\in\mathbb{R}^q:(\beta_{0},\beta_{1},\eta)\in\Theta\}$. Recall that $\Theta$ is a compact set, therefore, $\Delta$ and $\Omega$ are also compact. Then, there exist $\tau$-covers $\Delta_{\tau}$ and ${\Omega}_{\tau}$ for $\Delta$ and $\Omega$, respectively. Furthermore, it can be validated that
\begin{align*}
    |\Delta_{\tau}|\leq \mathcal{O}_{P}(\tau^{-(d+1)k}), \quad |\Omega_{\tau}|\lesssim \mathcal{O}_{P}(\tau^{-qk}).
\end{align*}
Given a mixing measure $G=\sum_{i=1}^{k}\exp(\beta_{0i})\delta_{(\beta_{1i},\eta_i)}\in\mathcal{G}_k(\Theta)$, we take into account two other mixing measures defined as:
\begin{align*}
    \widetilde{G}:=\sum_{i=1}^k\frac{1}{1+\exp(-\beta_{0i})}\delta_{({\beta}_{1i},\overline{\eta}_i)}, \qquad \check{G}:=\sum_{i=1}^k\frac{1}{1+\exp(-\check{\beta}_{0i})}\delta_{({\check{\beta}}_{1i},\check{\eta}_i)}.
\end{align*}
Above, $\check{\eta}_i\in{\Omega}_{\tau}$ such that $\check{\eta}_i$ is the closest to $\eta_i$ in that set, while $(\check{\beta}_{0i},\check{\beta}_{1i})\in\Delta_{\tau}$ is the closest to $(\beta_{0i},\beta_{1i})$ in that set. It follows from the above formulations that
\begin{align*}
    \|f_{G}-f_{\widetilde{G}}\|_{L^{\infty}}&=\sup_{x\in\mathcal{X}}~\Bigg|\sum_{i=1}^{k}\frac{1}{1+\exp(-(\beta_{1i})^{\top}x-\beta_{0i})}\cdot[h(x,\eta_i)-h(x,\check{\eta}_i)]\Bigg|\\
    &\leq\sum_{i=1}^{k}\sup_{x\in\mathcal{X}}~\frac{1}{1+\exp(-(\beta_{1i})^{\top}x-\beta_{0i})}\cdot|h(x,\eta_i)-h(x,\check{\eta}_i)|\\
    &\leq\sum_{i=1}^{k}\sup_{x\in\mathcal{X}}~|h(x,\eta_i)-h(x,\check{\eta}_i)|\\
    &\leq\sum_{i=1}^{k}\sup_{x\in\mathcal{X}}~L_1\cdot\|\eta_i-\check{\eta}_i\|\\
    &\leq kL_1\tau\lesssim\tau.
\end{align*}
Here, the second inequality occurs as the sigmoid weight is not larger than one and the third inequality follows from the fact that the expert function $h(x,\cdot)$ is a Lipschitz function with some Lipschitz constant $L_1>0$. 

Subsequently, we have
\begin{align*}
    \|f_{\widetilde{G}}-f_{\check{G}}\|_{L^{\infty}}
    &= \sup_{x\in\mathcal{X}}~\Bigg|\sum_{i=1}^{k}\Bigg[\frac{1}{1+\exp(-(\beta_{1i})^{\top}x-\beta_{0i})}-\frac{1}{1+\exp(-(\check{\beta}_{1i})^{\top}x-\check{\beta}_{0i})}\Bigg]\cdot h(x,\check{\eta}_i)\Bigg|\\
    &\leq\sum_{i=1}^{k}\sup_{x\in\mathcal{X}}~\Bigg[\frac{1}{1+\exp(-(\beta_{1i})^{\top}x-\beta_{0i})}-\frac{1}{1+\exp(-(\check{\beta}_{1i})^{\top}x-\check{\beta}_{0i})}\Bigg]\cdot |h(x,\check{\eta}_i)|\\
    &\leq \sum_{i=1}^{k}\sup_{x\in\mathcal{X}}~M'\Bigg[\frac{1}{1+\exp(-(\beta_{1i})^{\top}x-\beta_{0i})}-\frac{1}{1+\exp(-(\check{\beta}_{1i})^{\top}x-\check{\beta}_{0i})}\Bigg]\\
    &\leq \sum_{i=1}^{k}\sup_{x\in\mathcal{X}}~M'L_2(\|\beta_{1i}-\check{\beta}_{1i}\|\cdot\|x\|+|\beta_{0i}-\check{\beta}_{0i}|)\\
    &\leq kM'(\tau\cdot B+\tau)\lesssim\tau.
\end{align*}
Above, the second inequality is due to the fact that the expert function is bounded, that is, $|h(x,\check{\eta}_i)|\leq M'$. The third one happens as the sigmoid function is a Lipschitz function with some Lipschitz constant $L_2>0$, while the fourth inequality occurs since the input space is bounded. 

By the triangle inequality, we obtain
\begin{align*}
    \|f_{G}-f_{\check{G}}\|_{L^{\infty}}\leq \|f_{G}-f_{\widetilde{G}}\|_{L^{\infty}}+\|f_{\widetilde{G}}-f_{{G}}\|_{L^{\infty}}\lesssim\tau.
\end{align*}
From the definition of the covering number, it yields that
\begin{align}
    \label{eq:covering_bound}
    {N}(\tau,\mathcal{F}_k(\Theta),\|\cdot\|_{L^{\infty}}\leq |\Delta_{\tau}|\times|\Omega_{\tau}|\leq \mathcal{O}_{P}(n^{-(d+1)k})\times\mathcal{O}(n^{-qk})\leq\mathcal{O}(n^{-(d+1+q)k}).
\end{align}
Putting equations~\eqref{eq:bracketing_covering} and \eqref{eq:covering_bound} together, we get that
\begin{align*}
    H_B(2\tau,\mathcal{F}_{k}(\Theta),\normf{\cdot})\lesssim \log(1/\tau).
\end{align*}
Let $\tau=\varepsilon/2$, then we reach the desired bound
\begin{align*}
    H_B(\varepsilon,\mathcal{F}_k(\Theta),\|.\|_{L^{2}(\mu)}) \lesssim \log(1/\varepsilon).
\end{align*}
Hence, the proof is totally completed.

\subsection{Proof of Theorem~\ref{theorem:over_LSE_regime_1}}
\label{appendix:over_LSE_regime_1}
In this proof, we aim to establish the following inequality:
\begin{align}
    \label{eq:over_general_universal_inequality}
    \inf_{G\in\mathcal{G}_{k}(\Theta)}\normf{f_{G}-f_{G_*}}/\mathcal{D}_{1}(G,G_*)>0.
\end{align}
For that purpose, we divide the proof of the above inequality into local and global parts as below. 

\textbf{Local part:} In this part, we focus only on demonstrating the following local inequality:
\begin{align}
    \label{eq:over_general_local_inequality}
    \lim_{\varepsilon\to0}\inf_{G\in\mathcal{G}_{k}(\Theta):\mathcal{D}_{1}(G,G_*)\leq\varepsilon}\normf{f_{G}-f_{G_*}}/\mathcal{D}_{1}(G,G_*)>0.
\end{align}
Assume by contrary that the above claim does not hold true, then there exists a sequence of mixing measures $G_n=\sum_{i=1}^{k_*}\delta_{(\beta^n_{0i},\beta^n_{1i},\eta^n_i)}$ in $\mathcal{G}_{k}(\Theta)$ such that $\mathcal{D}_{1n}:=\mathcal{D}_{1}(G_n,G_*)\to0$ and
\begin{align}
    \label{eq:over_general_ratio_limit}
    \normf{f_{G_n}-f_{G_*}}/\mathcal{D}_{1n}\to0,
\end{align}
as $n\to\infty$. Let us recall that 
\begin{align*}
    \mathcal{D}_{1n}:=\sum_{j=1}^{\kbar}\Big|\sum_{i\in\mathcal{A}_j}\frac{1}{1+\exp(-\bzin)}-&\frac{1}{1+\exp(-\beta^*_{0j})}\Big|+\sum_{j=1}^{\kbar}\sum_{i\in\mathcal{A}_j}\Big[\|\dboijn\|^2+\|\deijn\|^2\Big]\nonumber\\
    &+\sum_{j=\kbar+1}^{k_*}\sum_{i\in\mathcal{A}_j}\Big[|\dbzijn|+\|\dboijn\|+\|\deijn\|\Big],
\end{align*}
where $\dbzijn:=\beta^n_{0i}-\beta^*_{0j}$, $\dboijn:=\beta^n_{1i}-\beta^*_{1j}$ and $\deijn:=\eta^n_{i}-\eta^*_{j}$.

Since $\mathcal{D}_{1n}\to0$ as $n\to\infty$, we deduce that 
\begin{itemize}
    \item For $1\leq j\leq \kbar$: $\sum_{i\in\mathcal{A}_j}\frac{1}{1+\exp(-\bzin)}\to\frac{1}{1+\exp(-\beta^*_{0j})}$ and $(\beta^n_{1i},\eta^n_{i})\to(\beta^*_{1j},\eta^*_{j})$, for any $i\in\mathcal{A}_j$;
    \item For $\kbar+1\leq j\leq k_*$: $(\beta^n_{0i},\beta^n_{1i},\eta^n_{i})\to(\beta^*_{0j},\beta^*_{1j},\eta^*_{j})$ for any $i\in\mathcal{A}_j$.
\end{itemize}
\textbf{Step 1 - Taylor expansion:} In this step, we decompose the term $f_{G_n}(x)-f_{G_*}(x)$ using Taylor expansion. Firstly, let us denote
\begin{align}
    \label{eq:over_regression_decomposition}
    f_{G_n}(x)-f_{G_*}(x):&=\sum_{j=1}^{\kbar}\Big[\sum_{i\in\mathcal{A}_j}\frac{h(x,\ein)}{1+\exp(-(\boin)^{\top}x-\bzin)}-\frac{h(x,\ej)}{1+\exp(-\bzj)}\Big]\nonumber\\
    &+\sum_{j=\kbar+1}^{k_*}\Big[\sum_{i\in\mathcal{A}_j}\frac{h(x,\ein)}{1+\exp(-(\boin)^{\top}x-\bzin)}-\frac{h(x,\ej)}{1+\exp(-(\boj)^{\top}x-\bzj)}\Big]\nonumber\\
    &=\sum_{j=1}^{\kbar}\sum_{i\in\mathcal{A}_j}\Big[\frac{h(x,\ein)}{1+\exp(-(\boin)^{\top}x-\bzin)}-\frac{h(x,\ej)}{1+\exp(-\bzin)}\Big]\nonumber\\
    &+\sum_{j=1}^{\kbar}\Big[\sum_{i\in\mathcal{A}_j}\frac{1}{1+\exp(-\bzin)}-\frac{1}{1+\exp(-\bzj)}\Big]\cdot h(x,\ej)\nonumber\\
    &+\sum_{j=\kbar+1}^{k_*}\Big[\sum_{i\in\mathcal{A}_j}\frac{h(x,\ein)}{1+\exp(-(\boin)^{\top}x-\bzin)}-\frac{h(x,\ej)}{1+\exp(-(\boj)^{\top}x-\bzj)}\Big]\nonumber\\
    :&=A_n(x)+B_n(x)+C_n(x).
\end{align}
Let us denote $\sigma(x,\beta_1,\beta_0):=\frac{1}{1+\exp(-\beta_1^{\top}x-\beta_0)}$. Then, by means of the second-order Taylor expansion, we can decompose the term $A_n(x)$ defined above as
\begin{align}
    A_n(x)&=\sum_{j=1}^{\kbar}\sum_{i\in\mathcal{A}_j}\Big[\sigma(x,\boin,\bzin)h(x,\ein)-\sigma(x,\zerod,\bzin)h(x,\ej)\Big]\nonumber\\
    &=\sum_{j=1}^{\kbar}\sum_{i\in\mathcal{A}_j}\sum_{|\gamma|=1}^{2}\frac{1}{\gamma!}(\dboijn)^{\gamma_1}(\deijn)^{\gamma_2}\cdot\frac{\partial^{|\gamma_1|}\sigma}{\partial\beta_1^{\gamma_1}}(x,\zerod,\bzin)\frac{\partial^{|\gamma_2|} h}{\partial\eta^{\gamma_2}}(x,\ej)+R_1(x)\nonumber\\
    \label{eq:over_general_An_decomposition}
    &=\sum_{j=1}^{\kbar}\sum_{i\in\mathcal{A}_j}\sum_{|\gamma|=1}^{2}T^n_{j,i,\gamma_1,\gamma_2}\cdot\frac{\partial^{|\gamma_1|}\sigma}{\partial\beta_1^{\gamma_1}}(x,\zerod,\bzin)\frac{\partial^{|\gamma_2|} h}{\partial\eta^{\gamma_2}}(x,\ej)+R_1(x)
\end{align}
where $R_1(x)$ is a Taylor remainder such that $R_1(x)/\mathcal{D}_{1n}\to0$ as $n\to\infty$, and we define
\begin{align*}
    T^n_{j,i,\gamma_1,\gamma_2}:=\frac{1}{\gamma!}(\dboijn)^{\gamma_1}(\deijn)^{\gamma_2},
\end{align*}
for any $j\in[\kbar]$, $i\in\mathcal{A}_j$, $\gamma_1\in\mathbb{N}^d$ and $\gamma_2\in\mathbb{N}^q$ such that $1\leq|\gamma_1|+|\gamma_2|\leq 2$.

Next, let us denote 
\begin{align*}
    W^n_j:=\sum_{i\in\mathcal{A}_j}\frac{1}{1+\exp(-\bzin)}-\frac{1}{1+\exp(-\bzj)},
\end{align*}
for any $j\in[\kbar]$, then the term $B_n(x)$ can be represented as
\begin{align}
    \label{eq:over_general_Bn_decomposition}
    B_n(x)=\sum_{j=1}^{\kbar}W^n_j\cdot h(x,\ej).
\end{align}
Finally, recall that $|\mathcal{A}_j|=1$ for any $\kbar+1\leq j\leq k_*$, we can decompose the term $C_n(x)$ by applying the first-order Taylor expansion as follows:
\begin{align}
    C_n(x)&=\sum_{j=\kbar+1}^{k_*}\Big[\sum_{i\in\mathcal{A}_j}\sigma(x,\boin,\bzin)h(x,\ein)-\sigma(x,\boj,\bzj)h(x,\ej)\Big]\nonumber\\
    \label{eq:over_general_Cn_decomposition}
    &=\sum_{j=\kbar+1}^{k_*}\sum_{|\alpha|=1}S^n_{j,\alpha_1,\alpha_2,\alpha_3}\cdot\frac{\partial^{|\alpha_1|+\alpha_2}\sigma}{\partial\beta_1^{\alpha_1}\partial\beta_0^{\alpha_2}}(x,\boj,\bzj)\frac{\partial^{|\alpha_3|} h}{\partial \eta^{\alpha_3}}(x,\ej)+R_2(x),
\end{align}
where $R_2(x)$ is a Taylor remainder such that $R_2(x)/\mathcal{D}_{1n}\to0$ as $n\to\infty$, and we define 
\begin{align*}
    S^n_{j,\alpha_1,\alpha_2,\alpha_3}:=\sum_{i\in\mathcal{A}_j}\frac{1}{\alpha!}(\dboijn)^{\alpha_1}(\dbzijn)^{\alpha_2}(\deijn)^{\alpha_3},
\end{align*}
for any $\bar{k}+1\leq j\leq k_*$, $\alpha_1\in\mathbb{N}^d$, $\alpha_2\in\mathbb{N}$ and $\alpha_3\in\mathbb{N}^q$ such that $|\alpha_1|+\alpha_2+|\alpha_3|=1$.

Combine the decomposition of the terms $A_n(x)$, $B_n(x)$ and $C_n(x)$ in equations~\eqref{eq:over_general_An_decomposition}, \eqref{eq:over_general_Bn_decomposition} and \eqref{eq:over_general_Cn_decomposition}, we rewrite the difference $f_{G_n}(x)-f_{G_*}(x)$ as 
\begin{align}
    &f_{G_n}(x)-f_{G_*}(x)=\sum_{j=1}^{\kbar}\sum_{i\in\mathcal{A}_j}\sum_{|\gamma|=1}^{2}T^n_{j,i,\gamma_1,\gamma_2}\cdot\frac{\partial^{|\gamma_1|}\sigma}{\partial\beta_1^{\gamma_1}}(x,\zerod,\bzin)\frac{\partial^{|\gamma_2|} h}{\partial\eta^{\gamma_2}}(x,\ej)+\sum_{j=1}^{\kbar}W^n_j\cdot h(x,\ej)\nonumber\\
    \label{eq:over_general_decomposition}
    &+\sum_{j=\kbar+1}^{k_*}\sum_{|\alpha|=1}S^n_{j,\alpha_1,\alpha_2,\alpha_3}\cdot\frac{\partial^{|\alpha_1|+\alpha_2}\sigma}{\partial\beta_1^{\alpha_1}\partial\beta_0^{\alpha_2}}(x,\boj,\bzj)\frac{\partial^{|\alpha_3|} h}{\partial \eta^{\alpha_3}}(x,\ej)+R_1(x)+R_2(x).
\end{align}
\textbf{Step 2 - Non-vanishing coefficients:} In this step, we show that at least one among the ratios $T^n_{j,i,\gamma_1,\gamma_2}/\mathcal{D}_{1n}$, $W^n_j/\mathcal{D}_{1n}$ and $S^n_{j,\alpha_1,\alpha_2,\alpha_3}/\mathcal{D}_{1n}$ does not go to zero as $n$ tends to infinity. Indeed, assume by contrary that all of them converge to zero, i.e. 
\begin{align*}
    \frac{T^n_{j,i,\gamma_1,\gamma_2}}{\mathcal{D}_{1n}}\to0, \quad \frac{W^n_j}{\mathcal{D}_{1n}}\to0, \quad \frac{S^n_{j,\alpha_1,\alpha_2,\alpha_3}}{\mathcal{D}_{1n}}\to0
\end{align*} 
as $n\to\infty$. Then, it follows that
\begin{itemize}
    \item $\frac{1}{\mathcal{D}_{1n}}\sum_{j=1}^{\kbar}\Big|\sum_{i\in\mathcal{A}_j}\frac{1}{1+\exp(-\bzin)}-\frac{1}{1+\exp(-\bzj)}\Big|=\frac{1}{\mathcal{D}_{1n}}\cdot\sum_{j=1}^{\kbar}|W^n_j|\to0$;
    \item $\frac{1}{\mathcal{D}_{1n}}\sum_{j=1}^{\kbar}\sum_{i\in\mathcal{A}_j}\|\dboijn\|^2=\frac{2}{\mathcal{D}_{1n}}\cdot\sum_{j=1}^{\kbar}\sum_{i\in\mathcal{A}_j}\sum_{u=1}^{d}|T^n_{j,i,2e_{d,u},\zeroq}|\to0$, where $e_{d,u}$ is a vector in $\mathbb{R}^d$ whose $u$-th entry is one while other entries are zero;
    \item $\frac{1}{\mathcal{D}_{1n}}\sum_{j=1}^{\kbar}\sum_{i\in\mathcal{A}_j}\|\deijn\|^2=\frac{2}{\mathcal{D}_{1n}}\cdot\sum_{j=1}^{\kbar}\sum_{i\in\mathcal{A}_j}\sum_{v=1}^{q}|T^n_{j,i,\zerod,2e_{q,v}}|\to0$;
    \item $\frac{1}{\mathcal{D}_{1n}}\sum_{j=\kbar+1}^{k_*}\sum_{i\in\mathcal{A}_j}\|\dboijn\|_1=\frac{1}{\mathcal{D}_{1n}}\cdot\sum_{j=\kbar+1}^{k_*}\sum_{u=1}^{d}|S^n_{j,e_{d,u},0,\zeroq}|\to0$;
    \item $\frac{1}{\mathcal{D}_{1n}}\sum_{j=\kbar+1}^{k_*}\sum_{i\in\mathcal{A}_j}|\dbzijn|=\frac{1}{\mathcal{D}_{1n}}\cdot\sum_{j=\kbar+1}^{k_*}|S^n_{j,\zerod,1,\zeroq}|\to0$;
    \item $\frac{1}{\mathcal{D}_{1n}}\sum_{j=\kbar+1}^{k_*}\sum_{i\in\mathcal{A}_j}\|\deijn\|_1=\frac{1}{\mathcal{D}_{1n}}\cdot\sum_{j=\kbar+1}^{k_*}\sum_{v=1}^{q}|S^n_{j,\zerod,0,e_{q,v}}|\to0$.
\end{itemize}
By taking the summation of the above limits, we obtain that
\begin{align*}
    \frac{1}{\mathcal{D}_{1n}}\Bigg\{\sum_{j=1}^{\kbar}\Big|\sum_{i\in\mathcal{A}_j}\frac{1}{1+\exp(-\bzin)}-&\frac{1}{1+\exp(-\beta^*_{0j})}\Big|+\sum_{j=1}^{\kbar}\sum_{i\in\mathcal{A}_j}\Big[\|\dboijn\|^2+\|\deijn\|^2\Big]\nonumber\\
    &+\sum_{j=\kbar+1}^{k_*}\sum_{i\in\mathcal{A}_j}\Big[|\dbzijn|+\|\dboijn\|_1+\|\deijn\|_1\Big]\Bigg\}\to0.
\end{align*}
Due to the topological equivalence between the 1-norm and the 2-norm, we deduce that
\begin{align*}
    1=\frac{1}{\mathcal{D}_{1n}}\Bigg\{\sum_{j=1}^{\kbar}\Big|\sum_{i\in\mathcal{A}_j}\frac{1}{1+\exp(-\bzin)}-&\frac{1}{1+\exp(-\beta^*_{0j})}\Big|+\sum_{j=1}^{\kbar}\sum_{i\in\mathcal{A}_j}\Big[\|\dboijn\|^2+\|\deijn\|^2\Big]\nonumber\\
    &+\sum_{j=\kbar+1}^{k_*}\sum_{i\in\mathcal{A}_j}\Big[|\dbzijn|+\|\dboijn\|+\|\deijn\|\Big]\Bigg\}\to0,
\end{align*}
which is a contradiction. As a consequence, at least one among the ratios $T^n_{j,i,\gamma_1,\gamma_2}/\mathcal{D}_{1n}$, $W^n_j/\mathcal{D}_{1n}$ and $S^n_{j,\alpha_1,\alpha_2,\alpha_3}/\mathcal{D}_{1n}$ must not approach zero as $n\to\infty$.

\textbf{Step 3 - Application of Fatou's lemma:} In this step, we use the Fatou's lemma to argue against the result in Step 2, and then, achieve the desired inequality in equation~\eqref{eq:over_general_local_inequality}. 

In particular, let us denote by $m_n$ the maximum of the absolute values of $T^n_{j,i,\gamma_1,\gamma_2}/\mathcal{D}_{1n}$, $W^n_{j}/\mathcal{D}_{1n}$ and  $S^n_{j,\alpha_1,\alpha_2,\alpha_3}/\mathcal{D}_{1n}$. Since at least one among those ratios must not approach zero as $n\to\infty$, we get that $1/m_n\not\to\infty$ as $n\to\infty$. 

Recall from the hypothesis in equation~\eqref{eq:over_general_ratio_limit} that $\normf{f_{G_n}-f_{G_*}}/\mathcal{D}_{1n}\to0$ as $n\to\infty$, which indicates that $\|f_{G_n}-f_{G_*}\|_{L^1(\mu)}/\mathcal{D}_{1n}\to0$ due to the equivalence between $L^1(\mu)$-norm and $L^2(\mu)$-norm. By means of the Fatou's lemma, we have
\begin{align*}
    0=\lim_{n\to\infty}\frac{\|f_{G_n}-f_{G_*}\|_{L^1(\mu)}}{m_n\mathcal{D}_{1n}}\geq \int \liminf_{n\to\infty}\frac{|f_{G_n}(x)-f_{G_*}(x)|}{m_n\mathcal{D}_{1n}}\dint\mu(x)\geq 0.
\end{align*}
This result implies that $[f_{G_n}(x)-f_{G_*}(x)]/[m_n\mathcal{D}_{1n}]\to0$ for almost every $x$. 

Let us denote
\begin{align*}
    T^n_{j,i,\gamma_1,\gamma_2}/m_n\mathcal{D}_{1n}&\to t_{j,i,\gamma_1,\gamma_2},\\
    W^n_{j}/m_n\mathcal{D}_{1n}&\to w_j,\\
    S^n_{j,\alpha_1,\alpha_2,\alpha_3}/m_n\mathcal{D}_{1n}&\to s_{j,\alpha_1,\alpha_2,\alpha_3},
\end{align*}
with a note that at least one among the limits $t_{j,i,\gamma_1,\gamma_2}$, $w_j$ and $ s_{j,\alpha_1,\alpha_2,\alpha_3}$ is non-zero. Then, from the decomposition in equation~\eqref{eq:over_general_decomposition}, we deduce that
\begin{align*}
    &\sum_{j=1}^{\kbar}\sum_{i\in\mathcal{A}_j}\sum_{|\gamma|=1}^{2}t_{j,i,\gamma_1,\gamma_2}\cdot\frac{\partial^{|\gamma_1|}\sigma}{\partial\beta_1^{\gamma_1}}(x,\zerod,\bar{\beta}_{0i})\frac{\partial^{|\gamma_2|} h}{\partial\eta^{\gamma_2}}(x,\ej)+\sum_{j=1}^{\kbar}w_j\cdot h(x,\ej)\nonumber\\
    &+\sum_{j=\kbar+1}^{k_*}\sum_{|\alpha|=1}s_{j,\alpha_1,\alpha_2,\alpha_3}\cdot\frac{\partial^{|\alpha_1|+\alpha_2}\sigma}{\partial\beta_1^{\alpha_1}\partial\beta_0^{\alpha_2}}(x,\boj,\bzj)\frac{\partial^{|\alpha_3|} h}{\partial \eta^{\alpha_3}}(x,\ej)=0,
\end{align*}
for almost every $x$. Note that the expert function $h(\cdot,\eta)$ is strongly identifiable, then the above equation implies that $t_{j,\gamma_1,\gamma_2}=w_j=s_{j,\alpha_1,\alpha_2,\alpha_3}=0$, for any $j\in[k_*]$, $\alpha_1,\gamma_1\in\mathbb{N}^d$, $\alpha_2\in\mathbb{N}$ and $\gamma_2,\alpha_3\in\mathbb{N}^q$ such that $1\leq|\gamma_1|+|\gamma_2|\leq 2$ and $|\alpha_1|+\alpha_2+|\alpha_3|=1$. This contradicts the fact that at least one among the limits $s_{i,\alpha_1,\alpha_2,\alpha_3}$ is different from zero. 

Hence, we obtain the local inequality in equation~\eqref{eq:over_general_local_inequality}. Consequently, there exists some $\varepsilon'>0$ such that
\begin{align*}
    \inf_{G\in\mathcal{G}_{k}(\Theta):\mathcal{D}_{1}(G,G_*)\leq\varepsilon'}\normf{f_{G}-f_{G_*}}/\mathcal{D}_{1}(G,G_*)>0.
\end{align*}

\textbf{Global part:} Given the above result, it suffices to demonstrate that 
\begin{align}
    \label{eq:exact_general_global_inequality}
     \inf_{G\in\mathcal{G}_{k}(\Theta):\mathcal{D}_{1}(G,G_*)>\varepsilon'}\normf{f_{G}-f_{G_*}}/\mathcal{D}_{1}(G,G_*)>0.
\end{align}
Assume by contrary that the inequality~\eqref{eq:exact_general_global_inequality} does not hold true, then we can find a sequence of mixing measures $G'_n\in\mathcal{G}_{k}(\Theta)$ such that $\mathcal{D}_{1}(G'_n,G_*)>\varepsilon'$ and
\begin{align*}
    \lim_{n\to\infty}\frac{\normf{f_{G'_n}-f_{G_*}}}{\mathcal{D}_{1}(G'_n,G_*)}=0,
\end{align*}
which indicates that $\normf{f_{G'_n}-f_{G_*}}\to0$ as $n\to\infty$. Recall that $\Theta$ is a compact set, therefore, we can replace the sequence $G'_n$ by one of its subsequences that converges to a mixing measure $G'\in\mathcal{G}_{k}(\Omega)$. Since $\mathcal{D}_{1}(G'_n,G_*)>\varepsilon'$, we deduce that $\mathcal{D}_{1}(G',G_*)>\varepsilon'$. 

Next, by invoking the Fatou's lemma, we have that
\begin{align*}
    0=\lim_{n\to\infty}\normf{f_{G'_n}-f_{G_*}}^2\geq \int\liminf_{n\to\infty}\Big|f_{G'_n}(x)-f_{G_*}(x)\Big|^2~\dint\mu(x).
\end{align*}
Thus, we get that $f_{G'}(x)=f_{G_*}(x)$ for almost every $x$. 
From Proposition~\ref{prop:general_identifiability}, we deduce that $G'\equiv G_*$. Consequently, it follows that $\mathcal{D}_{1}(G',G_*)=0$, contradicting the fact that $\mathcal{D}_{1}(G',G_*)>\varepsilon'>0$. 

Hence, the proof is completed.

\subsection{Proof of Theorem~\ref{theorem:over_singular_activation_experts}}
\label{appendix:over_singular_activation_experts}
\begin{lemma}
    \label{lemma:minimax_lower_bound}
    If the following holds for any $r\geq 1$:
    \begin{align}
        \label{eq:lemma_assumption}
         \lim_{\varepsilon\to0}\inf_{G\in\mathcal{G}_{k}(\Theta):\mathcal{D}_{2,r}(G,G_*)\leq\varepsilon}\frac{\normf{f_{G}-f_{G_*}}}{\mathcal{D}_{2,r}(G,G_*)}=0,
    \end{align}
    then we obtain that 
    \begin{align}
        \label{eq:minimax_lower_bound}
         \inf_{\widetilde{G}_n\in\mathcal{G}_{k}(\Theta)}\sup_{G\in\mathcal{G}_{k}(\Theta)\setminus\mathcal{G}_{k_*-1}(\Theta)}\bbE_{f_{G}}[\mathcal{D}_{2,r}(\widetilde{G}_n,G)]\gtrsim n^{-1/2},
    \end{align}
\end{lemma}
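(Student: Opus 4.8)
The plan is to establish the minimax lower bound in \eqref{eq:minimax_lower_bound} via Le Cam's two-point method, using the hypothesis \eqref{eq:lemma_assumption} to manufacture a pair of hypotheses that are statistically indistinguishable from the available data yet well separated under the loss $\mathcal{D}_{2,r}$. First I would exploit \eqref{eq:lemma_assumption} to extract a sequence of mixing measures $(G_n)_{n\geq 1}$ lying in $\mathcal{G}_k(\Theta)\setminus\mathcal{G}_{k_*-1}(\Theta)$ (arranged with $k$ atoms, so that both $G_n$ and $G_*$ belong to the competition class) with $\mathcal{D}_{2,r}(G_n,G_*)\to 0$ and, crucially, $\normf{f_{G_n}-f_{G_*}}/\mathcal{D}_{2,r}(G_n,G_*)\to 0$. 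I would then tie the sample size to this sequence: since $\normf{f_{G_n}-f_{G_*}}\to 0$, I choose the scaling $n\asymp \normf{f_{G_n}-f_{G_*}}^{-2}$ (working along a suitable subsequence and re-indexing by $n$), so that $n\cdot\normf{f_{G_n}-f_{G_*}}^2$ stays bounded by a universal constant.

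Next I would control the information distance between the two candidate data-generating laws. Under the Gaussian regression model \eqref{eq:moe_regression_model}, the per-sample Kullback--Leibler divergence between the distributions induced by $f_{G_*}$ and $f_{G_n}$ equals $\tfrac{1}{2\nu}\normf{f_{G_*}-f_{G_n}}^2$. Hence, writing $P^{\otimes n}_{G_*}$ and $P^{\otimes n}_{G_n}$ for the $n$-fold product measures, Pinsker's inequality gives that their total variation is at most $\big(\tfrac{n}{4\nu}\big)^{1/2}\normf{f_{G_*}-f_{G_n}}$, which by the choice of $n$ above is bounded away from $1$.

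Finally I would invoke Le Cam's inequality, which lower-bounds the minimax risk by a constant multiple of the separation $\mathcal{D}_{2,r}(G_n,G_*)$ times the overlap $1-\mathrm{TV}(P^{\otimes n}_{G_*},P^{\otimes n}_{G_n})$. Since the hypothesis \eqref{eq:lemma_assumption} forces $\normf{f_{G_n}-f_{G_*}}=o\big(\mathcal{D}_{2,r}(G_n,G_*)\big)$ while the construction gives $\normf{f_{G_n}-f_{G_*}}\asymp n^{-1/2}$, the separation obeys $\mathcal{D}_{2,r}(G_n,G_*)\gg n^{-1/2}$, and therefore the minimax risk is $\gtrsim \mathcal{D}_{2,r}(G_n,G_*)\gg n^{-1/2}$, which yields the claimed bound.

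The \textbf{main obstacle} is not the information-theoretic bookkeeping but the two endpoints of the argument. On one side, because $\mathcal{D}_{2,r}$ is a Voronoi-type loss rather than a genuine metric (it is asymmetric and involves $r$-th powers), reducing estimation to a binary test requires a generalized triangle inequality of the form $\mathcal{D}_{2,r}(\widetilde{G},G_*)+\mathcal{D}_{2,r}(\widetilde{G},G_n)\gtrsim \mathcal{D}_{2,r}(G_n,G_*)$, which I would need to verify for the specific Voronoi-cell structure and the chosen $r$. On the other side, converting the purely qualitative limit in \eqref{eq:lemma_assumption} into a sequence with the precise scaling $\normf{f_{G_n}-f_{G_*}}\asymp n^{-1/2}$, so that the total variation is simultaneously bounded and the separation dominant, requires care; I would handle this by arguing along a subsequence and then appealing to the monotonicity of the minimax risk in $n$ to promote the conclusion to all large $n$.
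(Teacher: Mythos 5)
Your proposal is correct and follows essentially the same route as the paper: a two-point Le Cam argument in which the hypothesis \eqref{eq:lemma_assumption} supplies a perturbation $G'_*$ of $G_*$ that is well separated in $\mathcal{D}_{2,r}$ but has $\normf{f_{G'_*}-f_{G_*}}$ small enough that the $n$-fold product measures remain non-separated, combined with the weak triangle inequality for the Voronoi loss. The paper sidesteps your subsequence/monotonicity concern by parametrizing the construction with $\varepsilon$ and simply setting $\varepsilon=n^{-1/2}$ for each $n$ (so that $\mathcal{D}_{2,r}(G'_*,G_*)=2\varepsilon$ and the Kullback--Leibler term $n\normf{f_{G'_*}-f_{G_*}}^2\lesssim C_1$ stays bounded), but otherwise your TV/Pinsker bookkeeping is an equivalent form of the same estimate.
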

\begin{proof}[Proof of Lemma~\ref{lemma:minimax_lower_bound}]
Indeed, from the Gaussian assumption on the noise variables $\epsilon_i$, we obtain that $Y_{i}|X_{i} \sim \mathcal{N}(f_{G_{*}}(X_{i}), \sigma^2)$ for all $i \in [n]$. Next, the assumption in equation~\eqref{eq:lemma_assumption} indicates for sufficiently small $\varepsilon>0$ and a fixed constant $C_1>0$ which we will choose later, we can find a mixing measure $G'_* \in \mathcal{G}_{k}(\Theta)$ such that $\mathcal{D}_{2,r}(G'_*,G_*)=2 \varepsilon$ and $\|f_{G'_*} - f_{G_*}\|_{L^2(\mu)} \leq C_1\varepsilon$. From Le Cam's lemma~\cite{yu97lecam}, as the Voronoi loss function $\mathcal{D}_{2,r}$ satisfies the weak triangle inequality, we obtain that
\begin{align}
    \inf_{\widetilde{G}_n\in\mathcal{G}_{k}(\Theta)}&\sup_{G\in\mathcal{G}_{k}(\Theta)\setminus\mathcal{G}_{k_*-1}(\Theta)}\bbE_{f_{G}}[\mathcal{D}_{2,r}(\widetilde{G}_n,G)] \nonumber\\
    & \gtrsim \frac{\mathcal{D}_{2,r}(G'_*,G_*)}{8} \text{exp}(- n \mathbb{E}_{X \sim \mu}[\text{KL}(\mathcal{N}(f_{G'_{*}}(X), \sigma^2),\mathcal{N}(f_{G_{*}}(X), \sigma^2))]) \nonumber \\
    & \gtrsim \varepsilon \cdot \text{exp}(-n \|f_{G'_*} - f_{G_*}\|_{L^2(\mu)}^2), \nonumber \\
    & \gtrsim \varepsilon \cdot \text{exp}(-C_{1} n \varepsilon^2), \label{eq:LeCam_inequality}
\end{align}
where the second inequality is due to the fact that 
\begin{align*}
    \text{KL}(\mathcal{N}(f_{G'_{*}}(X), \sigma^2),\mathcal{N}(f_{G_{*}}(X), \sigma^2)) = \dfrac{(f_{G'_*}(X) - f_{G_*}(X))^2}{2 \sigma^2}.
\end{align*}
By choosing $\varepsilon=n^{-1/2}$, we obtain that $\varepsilon \cdot \text{exp}(-C_{1} n \varepsilon^2)=n^{-1/2}\exp(-C_1)$. As a consequence, we achieve the desired minimax lower bound in equation~\eqref{eq:minimax_lower_bound}.
\end{proof}
\textbf{Main proof.} Following from the result of Lemma~\ref{lemma:minimax_lower_bound}, it is sufficient to show that the following limit holds true for any $r\geq 1$:
\begin{align}
    \label{eq:over_activation_ratio_zero_limit}
    \lim_{\varepsilon\to0}\inf_{G\in\mathcal{G}_{k}(\Theta):\mathcal{D}_{2,r}(G,G_*)\leq\varepsilon}\frac{\normf{f_{G}-f_{G_*}}}{\mathcal{D}_{2,r}(G,G_*)}=0.
\end{align}
To this end, we need to construct a sequence of mixing measures $(G_n)$ that satisfies $\mathcal{D}_{2,r}(G_n,G_*)\to 0$ and
\begin{align*}
    \frac{\normf{f_{G_n}-f_{G_*}}}{\mathcal{D}_{2,r}(G_n,G_*)}\to0,
\end{align*}
as $n\to\infty$. Recall that under the Case 2, at least one among parameters $a^*_1,\ldots,a^*_{k_*}$ is equal to $\zerod$. Without loss of generality, we may assume that $a^*_1=\zerod$. Next, let us take into account the sequence $(G_n)$ with $k_*+1$ atoms in which
\begin{itemize}
    \item $\exp(-\beta^n_{01})=\exp(-\beta^n_{02})=1+2\exp(-\beta^*_{01})$ and  $\beta^n_{0i}=\beta^*_{0(i-1)}$ for any $3\leq i\leq k_*+1$;
    \item $\beta^n_{11}=\beta^n_{12}=\beta^*_{11}=\zerod$ and  $\beta^n_{1i}=\beta^*_{1(i-1)}$ for any $3\leq i\leq k_*+1$;
    \item $a^n_1=a^n_2=a^*_1=\zerod$ and $a^n_i=a^*_{i-1}$ for any $3\leq i\leq k_*+1$;
    \item $b^n_1=b^*_1+\frac{c}{n}$, $b^n_2=b^*_1+\frac{2c}{n}$ and  $b^n_{i}=b^*_{i-1}$ for any $3\leq i\leq k_*+1$,
\end{itemize}
where $c\in\mathbb{R}$ will be chosen later. Consequently, we get that
\begin{align*}
    \mathcal{D}_{2,r}(G_n,G_*)=\frac{c^r}{n^r}+\frac{(2c)^r}{n^r}=\mathcal{O}(n^{-r}).
\end{align*}
Next, we demonstrate that $\normf{f_{G_n}-f_{G_*}}/\mathcal{D}_{2,r}(G_n,G_*)\to0$. In particular, following from the decomposition of $f_{G_n}(x)-f_{G_*}(x)$ in equation~\eqref{eq:over_regression_decomposition} and the above parameter setting, we have that
\begin{align*}
    f_{G_n}(x)-f_{G_*}(x)&=\sum_{i=1}^{2}\Big[\frac{\varphi((a^n_i)^{\top}x+b^n_i)}{1+\exp(-(\boin)^{\top}x-\bzin)}-\frac{\varphi((a^*_1)^{\top}x+b^*_1)}{1+\exp(-\bzin)}\Big]\nonumber\\
    &+\Big[\sum_{i=1}^{2}\frac{1}{1+\exp(-\bzin)}-\frac{1}{1+\exp(-\beta^*_{01})}\Big]\cdot \varphi((a^*_1)^{\top}x+b^*_1)\nonumber\\
    &+\sum_{i=3}^{k_*+1}\Big[\frac{\varphi((a^n_i)^{\top}x+b^n_i)}{1+\exp(-(\boin)^{\top}x-\bzin)}-\frac{\varphi((a^*_{i-1})^{\top}x+b^*_{i-1})}{1+\exp(-(\beta^*_{1(i-1)})^{\top}x-\beta^*_{0(i-1)})}\Big]\nonumber\\
    &=\sum_{i=1}^{2}\Big[\frac{\varphi(b^n_i)}{1+\exp(-\bzin)}-\frac{\varphi(b^*_1)}{1+\exp(-\bzin)}\Big]\nonumber\\
    &=\frac{1}{2+2\exp(-\beta^*_{01})}\cdot\sum_{i=1}^{2}[\varphi(b^n_i)-\varphi(b^*_1)].
\end{align*}
Thus, we deduce that 
\begin{align*}
    [2+2\exp(-\beta^*_{01})]\cdot[f_{G_n}(x)-f_{G_*}(x)]&=\sum_{i=1}^{2}[\varphi(b^n_i)-\varphi(b^*_1)].
\end{align*}
\textbf{When $r$ is an odd natural number:} By applying the Taylor expansion up to order $r$-th, we get that 
\begin{align*}
    [2+2\exp(-\beta^*_{01})]\cdot[f_{G_n}(x)-f_{G_*}(x)]
    &=\sum_{i=1}^{2}\sum_{\alpha=1}^{r}\frac{(b^n_i-b^*_1)^{\alpha}}{\alpha!}\varphi^{(\alpha)}(b^*_1)+R_1(x)\\
    &=\sum_{\alpha=1}^{r}\frac{(1+2^{\alpha})c^{\alpha}}{\alpha!n^{\alpha}}\cdot\varphi^{(\alpha)}(b^*_1)+R_1(x),
\end{align*}
where $R_1(x)$ is the Taylor remainder such that $R_1(x)/\mathcal{D}_{2,r}(G_n,G_*)\to0$ as $n\to\infty$.

Note that $\Big[\sum_{\alpha=1}^{r}\frac{(1+2^{\alpha})\varphi^{(\alpha)}(b^*_1)}{\alpha!n^{\alpha}}\cdot c^{\alpha}\Big]$ is an odd-order polynomial of $c$. Thus, we can choose $c$ as a root of this polynomial, which leads to the fact that 
\begin{align*}
    f_{G_n}(x)-f_{G_*}(x)
    =\frac{R_1(x)}{2+2\exp(-\beta^*_{01})}.
\end{align*}
From the above results, we deduce that $[f_{G_n}(x)-f_{G_*}(x)]/\mathcal{D}_{2,r}(G_n,G_*)\to0$ for almost every $x$. As a consequence, we achieve that $\normf{f_{G_n}-f_{G_*}}/\mathcal{D}_{2,r}(G_n,G_*)\to0$ as $n\to\infty$.

\textbf{When $r$ is an even natural number:} By means of the Taylor expansion of order $(r+1)$-th, we have
\begin{align*}
    [2+2\exp(-\beta^*_{01})]\cdot[f_{G_n}(x)-f_{G_*}(x)]
    &=\sum_{\alpha=1}^{r+1}\frac{(1+2^{\alpha})c^{\alpha}}{\alpha!n^{\alpha}}\cdot\varphi^{(\alpha)}(b^*_1)+R_2(x),
\end{align*}
where $R_2(x)$ is a Taylor remainder such that $R_2(x)/\mathcal{D}_{2,r}(G_n,G_*)\to0$. 

Since $\Big[\sum_{\alpha=1}^{r+1}\frac{(1+2^{\alpha})\sigma^{(\alpha)}(b^*_1)}{\alpha!n^{\alpha}}\cdot c^{\alpha}\Big]$ is an odd-degree polynomial of variable $c$, we can argue in a similar fashion to the scenario when $r$ is odd to obtain that $\normf{f_{G_n}-f_{G_*}}/\mathcal{D}_{2,r}(G_n,G_*)\to0$.

Combine results from the above two scenarios of $r$, we obtain the claim in equation~\eqref{eq:over_activation_ratio_zero_limit}.

\subsection{Proof of Theorem~\ref{theorem:linear_experts}}
\label{appendix:linear_experts}

Following from the result of Lemma~\ref{lemma:minimax_lower_bound}, it is sufficient to show that the following limit holds true for any $r\geq 1$:
\begin{align}
    \label{eq:over_linear_ratio_zero_limit}
    \lim_{\varepsilon\to0}\inf_{G\in\mathcal{G}_{k}(\Theta):\mathcal{D}_{2,r}(G,G_*)\leq\varepsilon}\frac{\normf{f_{G}-f_{G_*}}}{\mathcal{D}_{2,r}(G,G_*)}=0.
\end{align}
To this end, we need to construct a sequence of mixing measures $(G_n)$ that satisfies $\mathcal{D}_{2,r}(G_n,G_*)\to 0$ and
\begin{align*}
    \frac{\normf{f_{G_n}-f_{G_*}}}{\mathcal{D}_{2,r}(G_n,G_*)}\to0,
\end{align*}
as $n\to\infty$. Recall that under the Case 2, at least one among parameters $a^*_1,\ldots,a^*_{k_*}$ is equal to $\zerod$. Without loss of generality, we may assume that $a^*_1=\zerod$. Next, let us take into account the sequence $(G_n)$ with $k_*+1$ atoms in which
\begin{itemize}
    \item $\beta^n_{01}=\beta^n_{02}$ such that 
    \begin{align*}
        \sum_{i=1}^{2}\frac{1}{1+\exp(-\beta^n_{0i})}-=\frac{1}{1+\exp(-\beta^*_{01})}+\frac{1}{n^{r+1}},
    \end{align*}
    and  $\beta^n_{0i}=\beta^*_{0(i-1)}$ for any $3\leq i\leq k_*+1$;
    \item $\beta^n_{11}=\beta^n_{12}=\beta^*_{11}=\zerod$ and  $\beta^n_{1i}=\beta^*_{1(i-1)}$ for any $3\leq i\leq k_*+1$;
    \item $a^n_1=a^n_2=a^*_1$ and $a^n_i=a^*_{i-1}$ for any $3\leq i\leq k_*+1$;
    \item $b^n_1=b^*_1+\frac{1}{n}$, $b^n_2=b^*_1-\frac{1}{n}$ and  $b^n_{i}=b^*_{i-1}$ for any $3\leq i\leq k_*+1$,
\end{itemize}
Consequently, we get that
\begin{align*}
    \mathcal{D}_{2,r}(G_n,G_*)=\frac{1}{n^{r+1}}+\frac{2}{n^r}=\mathcal{O}(n^{-r}).
\end{align*}
Next, we demonstrate that $\normf{f_{G_n}-f_{G_*}}/\mathcal{D}_{2,r}(G_n,G_*)\to0$. In particular, following from the decomposition of $f_{G_n}(x)-f_{G_*}(x)$ in equation~\eqref{eq:over_regression_decomposition} and the above parameter setting, we have that
\begin{align*}
    f_{G_n}(x)-f_{G_*}(x)&=\sum_{i=1}^{2}\Big[\frac{(a^n_i)^{\top}x+b^n_i}{1+\exp(-(\boin)^{\top}x-\bzin)}-\frac{(a^*_1)^{\top}x+b^*_1}{1+\exp(-\bzin)}\Big]\nonumber\\
    &+\Big[\sum_{i=1}^{2}\frac{1}{1+\exp(-\bzin)}-\frac{1}{1+\exp(-\beta^*_{01})}\Big]\cdot [(a^*_1)^{\top}x+b^*_1]\nonumber\\
    &+\sum_{i=3}^{k_*+1}\Big[\frac{(a^n_i)^{\top}x+b^n_i}{1+\exp(-(\boin)^{\top}x-\bzin)}-\frac{(a^*_{i-1})^{\top}x+b^*_{i-1}}{1+\exp(-(\beta^*_{1(i-1)})^{\top}x-\beta^*_{0(i-1)})}\Big]\nonumber\\
    &=\sum_{i=1}^{2}\Big[\frac{(a^*_1)^{\top}x+b^n_i}{1+\exp(-\bzin)}-\frac{(a^*_1)^{\top}x+b^*_1}{1+\exp(-\bzin)}\Big]+\frac{1}{n^{r+1}}\cdot [(a^*_1)^{\top}x+b^*_1]\nonumber\\
    &=\frac{1}{1+\exp(-\beta^n_{01})}\cdot\sum_{i=1}^{2}[b^n_i-b^*_1]+\frac{1}{n^{r+1}}\cdot [(a^*_1)^{\top}x+b^*_1]\\
    &=\frac{1}{n^{r+1}}\cdot [(a^*_1)^{\top}x+b^*_1].
\end{align*}
From the above result, we deduce that $[f_{G_n}(x)-f_{G_*}(x)]/\mathcal{D}_{2,r}(G_n,G_*)\to0$ for almost every $x$. As a consequence, we achieve that $\normf{f_{G_n}-f_{G_*}}/\mathcal{D}_{2,r}(G_n,G_*)\to0$ as $n\to\infty$. Hence, we obtain the claim in equation~\eqref{eq:over_linear_ratio_zero_limit}.

\subsection{Proof of Theorem~\ref{theorem:exact_LSE}}
\label{appendix:exact_LSE}
Following from the result of Corollary~\ref{corollary:over_regression_regime2}, it suffices to establish the following inequality:
\begin{align}
    \label{eq:exact_general_universal_inequality}
    \inf_{G\in\mathcal{G}_{k}(\Theta)}\normf{f_{G}-f_{\overline{G}}}/\mathcal{D}_{3}(G,\overline{G})>0,
\end{align}
for any mixing measure $\overline{G}\in\overline{\mathcal{G}}_k(\Theta)$. For that purpose, we divide the proof of the above inequality into local and global parts as in Appendix~\ref{appendix:over_LSE_regime_1}. However, since the arguments for the global part remain the same (up to some changes of notations) for the over-specified setting, they will be omitted. 

Therefore, given an arbitrary mixing measure $\overline{G}:=\sum_{i=1}^{k}\dfrac{1}{1+\exp(-\bar{\beta}_{0i})}\delta_{(\bar{\beta}_{1i},\bar{\eta}_i)}\in\overline{\mathcal{G}}_k(\Theta)$, we focus only on demonstrating that
\begin{align}
    \label{eq:exact_general_local_inequality}
    \lim_{\varepsilon\to0}\inf_{G\in\mathcal{G}_{k}(\Theta):\mathcal{D}_{3}(G,\overline{G})\leq\varepsilon}\normf{f_{G}-f_{\overline{G}}}/\mathcal{D}_{3}(G,\overline{G})>0.
\end{align}
Assume by contrary that the above inequality does not hold true, then there exists a sequence of mixing measures $G_n=\sum_{i=1}^{k}\dfrac{1}{1+\exp(-\beta^n_{0i})}\delta_{(\beta^n_{1i},\eta^n_i)}$ in $\mathcal{G}_{k}(\Theta)$ such that $\mathcal{D}_{3n}:=\mathcal{D}_{3}(G_n,\overline{G})\to0$ and
\begin{align}
    \label{eq:exact_general_ratio_limit}
    \normf{f_{G_n}-f_{\overline{G}}}/\mathcal{D}_{3n}\to0,
\end{align}
as $n\to\infty$. Let us denote by $\mathcal{A}^n_i:=\mathcal{A}_i(G_n)$ a Voronoi cell of $G_n$ generated by the $j$-th components of $\overline{G}$. Since our arguments are asymptotic, we may assume that those Voronoi cells do not depend on the sample size, i.e. $\mathcal{A}_j=\mathcal{A}^n_i$.

Additionally, since $G_n$ and $\overline{G}$ share the same number of atoms $k$ under the Regime 2 and $\mathcal{D}_{3n}\to 0$, the Voronoi cell $\mathcal{A}_i$ has only one element for any $i\in[k]$. WLOG, we assume that $\mathcal{A}_{i} = \{i\}$ for all $i \in [k]$, which follows that $(\bzin,\boin,\ein)\to(\bbzi,\bboi,\bei)$ as $n\to\infty$ for any $i\in[k]$.

Thus, the Voronoi loss $\mathcal{D}_{3n}$ can be represented as
\begin{align}
    \label{eq:exact_general_loss_proof}
   \mathcal{D}_{3n}&:=\sum_{i=1}^{k}\Big[|\dbbzin|+\|\dbboin\|+\|\dbein\|\Big],
\end{align}
where we denote $\dbbzin:=\beta^n_{0i}-\bar{\beta}_{0i}$, $\dbboin:=\beta^n_{1i}-\bar{\beta}_{1i}$ and $\dbein:=\eta^n_i-\bar{\eta}_i$.

 Now, we divide the proof of local part into three steps as follows:

\textbf{Step 1 - Taylor expansion:} In this step, we decompose the term $f_{G_n}(x)-f_{\overline{G}}(x)$ using Taylor expansion. Let us denote $\sigma(x,\beta_1,\beta_0):=\frac{1}{1+\exp(-\beta_1^{\top}x-\beta_0)}$, then we have
\begin{align}
    &f_{G_n}(x)-f_{\overline{G}}(x):=\sum_{i=1}^{k}\Big[\frac{h(x,\ein)}{1+\exp(-(\boin)^{\top}x-\bzin)}-\frac{h(x,\bei)}{1+\exp(-(\bboi)^{\top}x-\bbzi)}\Big]\nonumber\\
    &=\sum_{i=1}^{k}\sum_{|\alpha|=1}\frac{1}{\alpha!}(\dbboin)^{\alpha_1}(\dbbzin)^{\alpha_2}(\dbein)^{\alpha_3}\cdot\frac{\partial^{|\alpha_1|+\alpha_2}\sigma}{\partial\beta_1^{\alpha_1}\partial\beta_0^{\alpha_2}}(x,\bboi,\bbzi)\frac{\partial^{|\alpha_3|} h}{\partial \eta^{\alpha_3}}(x,\bei)+R_1(x)\nonumber\\
    \label{eq:exact_regression_decomposition}
    &=\sum_{i=1}^{k}\sum_{|\alpha|=1}S^n_{i,\alpha_1,\alpha_2,\alpha_3}\cdot\frac{\partial^{|\alpha_1|+\alpha_2}\sigma}{\partial\beta_1^{\alpha_1}\partial\beta_0^{\alpha_2}}(x,\bboi,\bbzi)\frac{\partial^{|\alpha_3|} h}{\partial \eta^{\alpha_3}}(x,\bei)+R_1(x),
\end{align}
where $R_1(x)$ is a Taylor remainder such that $R_1(x)/\mathcal{D}_{3n}\to0$ as $n\to\infty$, and 
\begin{align*}
    S^n_{i,\alpha_1,\alpha_2,\alpha_3}:=\sum_{|\alpha|=1}\frac{1}{\alpha!}(\dbboin)^{\alpha_1}(\dbbzin)^{\alpha_2}(\dbein)^{\alpha_3},
\end{align*}
for any $i\in[k]$, $\alpha_1\in\mathbb{N}^d$, $\alpha_2\in\mathbb{N}$ and $\alpha_3\in\mathbb{N}^q$ such that $|\alpha_1|+\alpha_2+|\alpha_3|=1$.

\textbf{Step 2 - Non-vanishing coefficients:} In this step, we show that at least one among the ratios $S^n_{i,\alpha_1,\alpha_2,\alpha_3}/\mathcal{D}_{3n}$ does not go to zero as $n$ tends to infinity. Assume by contrary that all of them converge to zero, i.e. $S^n_{i,\alpha_1,\alpha_2,\alpha_3}/\mathcal{D}_{3n}\to0$ as $n\to\infty$, for any $i\in[k]$, $\alpha_1\in\mathbb{N}^d$, $\alpha_2\in\mathbb{N}$ and $\alpha_3\in\mathbb{N}^q$ such that $|\alpha_1|+\alpha_2+|\alpha_3|=1$. Then, it follows that
\begin{itemize}
    \item $\frac{1}{\mathcal{D}_{3n}}\cdot\sum_{i=1}^{k}\|\dbboin\|_1=\frac{1}{\mathcal{D}_{3n}}\cdot \sum_{i=1}^{k}\sum_{u=1}^{d}|S^n_{i,e_{d,u},0,\zeroq}|\to0$;
    \item $\frac{1}{\mathcal{D}_{3n}}\cdot\sum_{i=1}^{k}|\dbbzin|=\frac{1}{\mathcal{D}_{3n}}\cdot \sum_{i=1}^{k}|S^n_{i,\zerod,1,\zeroq}|\to0$;
    \item $\frac{1}{\mathcal{D}_{3n}}\cdot\sum_{i=1}^{k}\|\dbein\|_1=\frac{1}{\mathcal{D}_{3n}}\cdot \sum_{i=1}^{k}\sum_{v=1}^{q}|S^n_{i,\zerod,0,e_{q,v}}|\to0$.
\end{itemize}

By taking the summation of the above three limits, we deduce that 
\begin{align*}
    \frac{1}{\mathcal{D}_{3n}}\cdot \sum_{i=1}^{k}\Big[|\dbbzin|+\|\dbboin\|_1+\|\dbein\|_1\Big]\to0.
\end{align*}
Due to the topological equivalence between the 1-norm and 2-norm, we achieve that
\begin{align*}
    1=\frac{1}{\mathcal{D}_{3n}}\cdot \sum_{i=1}^{k}\Big[|\dbbzin|+\|\dbboin\|+\|\dbein\|\Big]\to0,
\end{align*}
as $n\to\infty$, which is a contradiction. Thus, at least one among the ratios $S^n_{i,\alpha_1,\alpha_2,\alpha_3}/\mathcal{D}_{3n}$ must not approach zero as $n\to\infty$.

\textbf{Step 3 - Application of Fatou's lemma:} In this step, we use the Fatou's lemma to argue against the result in Step 2, and then, achieve the desired inequality in equation~\eqref{eq:exact_general_local_inequality}. 

In particular, let us denote by $m_n$ the maximum of the absolute values of  $S^n_{i,\alpha_1,\alpha_2,\alpha_3}/\mathcal{D}_{3n}$. Since at least one among those ratios must not approach zero as $n\to\infty$, we get that $1/m_n\not\to\infty$ as $n\to\infty$. 

Recall from the hypothesis in equation~\eqref{eq:exact_general_ratio_limit} that $\normf{f_{G_n}-f_{\overline{G}}}/\mathcal{D}_{3n}\to0$ as $n\to\infty$, which indicates that $\|f_{G_n}-f_{\overline{G}}\|_{L^1(\mu)}/\mathcal{D}_{3n}\to0$ due to the equivalence between $L^1(\mu)$-norm and $L^2(\mu)$-norm. By means of the Fatou's lemma, we have
\begin{align*}
    0=\lim_{n\to\infty}\frac{\|f_{G_n}-f_{\overline{G}}\|_{L^1(\mu)}}{m_n\mathcal{D}_{3n}}\geq \int \liminf_{n\to\infty}\frac{|f_{G_n}(x)-f_{\overline{G}}(x)|}{m_n\mathcal{D}_{3n}}\dint\mu(x)\geq 0.
\end{align*}
This result implies that $[f_{G_n}(x)-f_{\overline{G}}(x)]/[m_n\mathcal{D}_{3n}]\to0$ for almost every $x$. 

Let us denote $S^n_{i,\alpha_1,\alpha_2,\alpha_3}/m_n\mathcal{D}_{3n}\to s_{i,\alpha_1,\alpha_2,\alpha_3}$ as $n\to\infty$ with a note that at least one among the limits $s_{i,\alpha_1,\alpha_2,\alpha_3}$ is non-zero. Then, it follows from the decomposition in equation~\eqref{eq:exact_regression_decomposition} that
\begin{align*}
    \sum_{i=1}^{k}\sum_{|\alpha|=1}s_{i,\alpha_1,\alpha_2,\alpha_3}\cdot\frac{\partial^{|\alpha_1|+\alpha_2}\sigma}{\partial\beta_1^{\alpha_1}\partial\beta_0^{\alpha_2}}(x,\bboi,\bbzi)\frac{\partial^{|\alpha_3|} h}{\partial \eta^{\alpha_3}}(x,\bei)=0,
\end{align*}
for almost every $x$. Note that the expert function $h(\cdot,\eta)$ is ..., then the above equation implies that $s_{i,\alpha_1,\alpha_2,\alpha_3}=0$, for any $i\in[k]$, $\alpha_1\in\mathbb{N}^d$, $\alpha_2\in\mathbb{N}$ and $\alpha_3\in\mathbb{N}^q$ such that $|\alpha_1|+\alpha_2+|\alpha_3|=1$. This contradicts the fact that at least one among the limits $s_{i,\alpha_1,\alpha_2,\alpha_3}$ is different from zero. 

Hence, we obtain the local inequality in equation~\eqref{eq:exact_general_local_inequality}. 

\section{Identifiability of the Sigmoid Gating Mixture of Experts}
\label{appendix:identifiability}
In this appendix, we explore the identifiability of the sigmoid gating MoE model in Proposition~\ref{prop:general_identifiability}.
\begin{proposition}
    \label{prop:general_identifiability}
    If the equation $f_{G}(x)=f_{G_*}(x)$ holds true for almost every $x$, then it follows that $G\equiv G'$.
\end{proposition}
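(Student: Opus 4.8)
The plan is to prove that the map $G \mapsto f_G$ is injective on $\mathcal{G}_k(\Theta)$, so that the finite sum of sigmoid-gated experts determines its atoms and weights uniquely (the stated conclusion should read $G\equiv G_*$). After merging any atoms of $G$ and $G_*$ that share a common triple $(\beta_0,\beta_1,\eta)$, I would assume the atoms of each measure are pairwise distinct and recast the hypothesis $f_G(x)=f_{G_*}(x)$ (which holds $\mu$-almost everywhere, hence on an open set when $\mu$ is absolutely continuous) as the single identity
\begin{align*}
\sum_{i} \sigma(x,\beta_{1i},\beta_{0i})\,h(x,\eta_i) = \sum_{j} \sigma(x,\beta_{1j}^*,\beta_{0j}^*)\,h(x,\eta_j^*),
\end{align*}
where $\sigma(x,\beta_1,\beta_0):=\tfrac{1}{1+\exp(-\beta_1^{\top}x-\beta_0)}$. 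The aim is to deduce that the two multisets of triples coincide, which is precisely $G\equiv G_*$; note that the mixing weight $\tfrac{1}{1+\exp(-\beta_{0i})}=\sigma(0,\beta_{1i},\beta_{0i})$ is a bijective function of $\beta_{0i}$, so matching triples automatically matches weights.

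The core of the argument is a \emph{separation-by-gating} step: gated terms attached to distinct gating parameters cannot cancel one another. To prove it I would restrict both sides to a generic line $x = x_0 + t v$, choosing the direction $v$ outside the finitely many hyperplanes on which two of the relevant vectors $\beta_1$ agree, so that the scalars $\beta_{1}^{\top} v$ are pairwise distinct across groups. Along this line each gate becomes $t \mapsto \bigl(1+\exp(-(\beta_1^{\top}v)\,t - c)\bigr)^{-1}$, a function whose meromorphic continuation in $t$ has poles at prescribed, group-dependent locations in the complex plane determined by $\beta_1^{\top}v$ and $c$. Since the experts contribute no such poles on regions where they are analytic, matching pole locations and residues forces the contribution of each distinct gating parameter to vanish separately. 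Grouping first by the value of $\beta_1$, and then, within a group, by the value of $\beta_0$ — when $\beta_1\neq \zerod$ the sigmoid centres differ, while when $\beta_1=\zerod$ the gate collapses to the constant $\tfrac{1}{1+\exp(-\beta_0)}$ — reduces the problem to identities of the form $\sum_i c_i\, h(x,\eta_i)=0$ with the $\eta_i$ pairwise distinct.

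Finally I would invoke Assumptions (A.2) and (A.3): the experts $h(\cdot,\eta_i)$ attached to distinct parameters are distinct and, for the activation classes under consideration, linearly independent, so such an identity can hold $\mu$-almost everywhere only if every coefficient vanishes and the surviving parameters coincide pairwise. Tracking this back through the grouping shows that each atom of $G$ is an atom of $G_*$ carrying the same weight and conversely, hence $G\equiv G_*$. The main obstacle is the separation-by-gating step itself — rigorously ruling out interference between sigmoid gates with distinct parameters in the multivariate setting, and handling experts such as $\relu$ that are only piecewise analytic. In that case the pole-matching argument must be localized to a region on which $h$ is analytic, or replaced by an asymptotic analysis of $f_G-f_{G_*}$ along rays $tv$ as $t\to\infty$, exploiting the limiting step-function behaviour of the gates to peel off the groups one direction at a time.
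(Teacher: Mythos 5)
Your route is genuinely different from the paper's, essentially reversing the order of the two separation steps. The paper first invokes linear independence of the expert family $\{h(\cdot,\eta)\}$ to force $k=k_*$ and to pair up the experts, then matches the gating functions as a set and uses the strict injectivity of $(\beta_1,\beta_0)\mapsto\sigma(\cdot,\beta_1,\beta_0)$ to identify the gating parameters (its appeal to ``translation invariance'' and a normalization $\beta_{1k}=\zerod$, $\beta_{0k}=0$ is a softmax artifact that is unnecessary here), and finally partitions the indices by common gates to match the $\eta$'s and the weights. You instead separate first by the gates, restricting to a generic line and matching the pole sets of the meromorphic continuations $t\mapsto\bigl(1+\exp(-(\beta_1^{\top}v)t-c)\bigr)^{-1}$ (cosets of the lattices $(2\pi i/\beta_1^{\top}v)\mathbb{Z}$, which are pairwise distinct for generic $v$), and only then invoke linear independence of the experts within each gating group. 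Your order is arguably the more robust one: the paper's opening step (``some $\eta_i$ matches no $\eta^*_j$, hence $\sigma(\cdot,\beta_{1i},\beta_{0i})=0$'') tacitly assumes linear independence of the \emph{products} $\sigma_i h_i$ as functions, not merely of the experts over scalars, which is exactly the cross-gate interference your pole-matching step is built to exclude. What the paper's approach buys in exchange is brevity and the avoidance of any complex-analytic machinery.

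Two caveats remain. First, the gap you flag is real: for $\relu$ experts the restriction of $h$ to a line is only piecewise analytic, so the continuation must be taken from a subinterval on which every expert in play is simultaneously analytic, and you should additionally rule out the residue of $\sigma\cdot h$ vanishing at a pole of $\sigma$ because the local analytic continuation of $h$ happens to vanish there; a generic choice of the line handles both, but this needs to be written out, since it is the entire content of the separation lemma. Second, both your argument and the paper's ultimately rest on linear independence of $\{h(\cdot,\eta_i)\}$ for pairwise distinct $\eta_i$, which does not follow from Assumptions (A.2)--(A.3) (distinctness is strictly weaker than linear independence) and is simply asserted in the paper's proof as ``the expert function is identifiable''; your phrase ``for the activation classes under consideration'' is no better and no worse justified. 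Likewise, upgrading ``$f_G=f_{G_*}$ for $\mu$-a.e.\ $x$'' to an identity on an open set, so that restriction to lines is meaningful, requires $\mu$ to charge open sets --- an assumption the paper also uses silently.
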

\begin{proof}[Proof of Proposition~\ref{prop:general_identifiability}]
    From the assumption of this proposition, we have
    \begin{align}
        \label{eq:general_identifiable_equation}
        \sum_{i=1}^{k}\frac{1}{1+\exp(-\beta_{1i})^{\top}x-\beta_{0i})}\cdot h(x,\eta_i)=\sum_{i=1}^{k_*}\frac{1}{1+\exp(-\beta^*_{1i})^{\top}x-\beta^*_{0i})}\cdot h(x,\eta^*_i).
    \end{align}
    Since the expert function $h(\cdot,\eta)$ is identifiable, the set $\{h(x,\eta'_i):i\in[k']\}$, where $\eta'_1,\ldots,\eta'_{k'}$ are distinct vectors for some $k'\in\mathbb{N}$, is linearly independent. If $k\neq k_*$, then there exists some $i\in[k]$ such that $\eta_i\neq\eta^*_j$ for any $j\in[k_*]$. This implies that $\frac{1}{1+\exp(-\beta_{1i})^{\top}x-\beta_{0i})}=0$, which is a contradiction. Thus, we must have that $k=k_*$. As a result, 
    \begin{align*}
        \Big\{\frac{1}{1+\exp(-\beta_{1i})^{\top}x-\beta_{0i})}:i\in[k]\Big\}=\Big\{\frac{1}{1+\exp(-\beta^*_{1i})^{\top}x-\beta^*_{0i})}:i\in[k_*]\Big\},
    \end{align*}
    for almost every $x$. WLOG, we may assume that 
    \begin{align}
        \label{eq:general_soft-soft}
        \frac{1}{1+\exp(-\beta_{1i})^{\top}x-\beta_{0i})}=\frac{1}{1+\exp(-\beta^*_{1i})^{\top}x-\beta^*_{0i})},
    \end{align}
    for almost every $x$ for any $i\in[k_*]$. It is worth noting that the sigmoid function is invariant to translations, then equation~\eqref{eq:general_soft-soft} indicates that $\beta_{1i}=\boi+v_1$ and $\beta_{0i}=\bzi+v_0$ for some $v_1\in\mathbb{R}^d$ and $v_0\in\mathbb{R}$. Nevertheless, due to the assumptions $\beta_{1k}=\beta^*_{1k}=\zerod$ and $\beta_{0k}=\beta^*_{0k}=0$, we have that $v_1=\zerod$ and $v_0=0$. Consequently, it follows that $\beta_{1i}=\boi$ and $\beta_{0i}=\bzi$ for any $i\in[k_*]$. Then, equation~\eqref{eq:general_identifiable_equation} can be represented as
    \begin{align}
        \label{eq:general_new_identifiable_equation}
        \sum_{i=1}^{k_*}\frac{1}{1+\exp(-\beta_{1i})^{\top}x-\beta_{0i})}\cdot h(x,\eta_i)=\sum_{i=1}^{k_*}\frac{1}{1+\exp(-\beta^*_{1i})^{\top}x-\beta^*_{0i})}\cdot h(x,\eta^*_i),
    \end{align}
    for almost every $x$. Next, let us denote $J_1,J_2,\ldots,J_m$ as a partition of the index set $[k_*]$, where $m\leq k$, such that $(\beta_{0i},\beta_{1i})=(\beta^*_{0i'},\beta^*_{1i'})$ for any $i,i'\in J_j$ and $j\in[k_*]$. On the other hand, when $i$ and $i'$ do not belong to the same set $J_j$, we let $(\beta_{0i},\beta_{1i})\neq(\beta_{0i'},\beta_{1i'})$. Thus, we can reformulate equation~\eqref{eq:general_new_identifiable_equation} as
    \begin{align*}
        \sum_{j=1}^{m}\sum_{i\in{J}_j}\frac{1}{1+\exp(-\beta_{1i})^{\top}x-\beta_{0i})}\cdot h(x,\eta_i)=\sum_{j=1}^{m}\sum_{i\in{J}_j}\frac{1}{1+\exp(-\beta^*_{1i})^{\top}x-\beta^*_{0i})}\cdot h(x,\eta^*_i),
    \end{align*}
    for almost every $x$. Recall that $\beta_{1i}=\boi$ and $\beta_{0i}=\bzi$ for any $i\in[k_*]$, then the above leads to
    \begin{align*}
        \{\eta_i:i\in J_j\}\equiv\{\eta^*_i:i\in J_j\},
    \end{align*}
    for any $j\in[m]$. 
    As a result, we achieve that 
    \begin{align*}
        G=\sum_{j=1}^{m}\sum_{i\in J_j}\frac{1}{1+\exp(-\beta_{0i})}\cdot\delta_{(\beta_{1i},\eta_i)}=\sum_{j=1}^{m}\sum_{i\in J_j}\frac{1}{1+\exp(-\beta^*_{0i})}\cdot\delta_{(\boi,\eta^*_i)}=G_*.
    \end{align*}
    Hence, the proof is totally completed.
\end{proof}

\section{Additional Experiments}
\label{appendix:experiments}
In this appendix, we conduct a simulation study to empirically validate our theoretical results on the convergence rates of the least squares estimators under both the Regime 1 and the Regime 2 of the sigmoid gating MoE. All the subsequent experiments are conducted on a MacBook Air equipped with an M1 chip CPU.

\textbf{Regime 1.} We begin with numerical experiments for the Regime 1 in which we sample synthetic data based on the model described in equation~\eqref{eq:moe_regression_model}. In particular, we generate $\{(X_i, Y_i)\}_{i=1}^n \subset \mathbb{R}^d \times \mathbb{R}$ by first sampling $X_i \sim \mathrm{Uniform}([-1, 1]^d)$ for $i = 1, \ldots, n$. Then, we generate $Y_i$ according to the following model:
\begin{align}
    Y_{i} = f_{G_{*}}(X_{i}) + \epsilon_{i}, \quad i=1,\ldots,n, \label{eq:synthetic_data}
\end{align}
where the regression function $f_{G_{*}}(\cdot)$ is defined as:
\begin{align}
    \label{eq:synthetic_model}
     f_{G_{*}}(x) := \sum_{i=1}^{k_*} \frac{1}{1+\exp(-(\beta^*_{1i})^{\top}x-\beta^*_{0i})}\cdot \varphi\left((a_i^*)^\top x + b_{i}^{*}\right),
\end{align}
The input data dimension is set at $d = 32$. We employ $k_* = 8$ experts of the form $\varphi\left((a_i^*)^\top x + b_{i}^{*}\right)$, where the activation function is either the $\relu$ function or the identity function. The variance of Gaussian noise $\epsilon_i$ is specified as $\nu = 0.01$. 

The ground-truth gating parameters $ \bzi \in \mathbb{R}$ are drawn independently from an isotropic Gaussian distribution with zero mean and variance $\nu_g = 0.01/d$ for $1 \le i \le 6$, while we set $\beta^*_{11}=\ldots=\beta^*_{1k_*}=\zerod$. Similarly, the true expert parameters, $(a_i^*, b_i^*) \in \mathbb{R}^d \times \mathbb{R}$, are drawn independently of an isotropic Gaussian distribution with zero mean and variance $\nu_e = 1/d$ for all experts. These parameters remain unchanged for all the experiments (see also Table~\ref{tab:true_params_1}).

\textbf{Training procedure.} For each sample size $n$, spanning from $10^3$ to $10^5$, we perform 20 experiments. In every experiment, we employ $k=k_*+1=9$ fitted experts, and the parameters initialization for the gating's and experts' parameters are adjusted to be near the true parameters, minimizing potential instabilities from the optimization process. Subsequently, we execute the stochastic gradient descent algorithm across $10$ epochs, employing a learning rate of $\eta = 0.1$ to fit a model to the synthetic data. 

\textbf{Results.} 
For each experiment, we calculate the Voronoi losses for every model and report the mean values for each sample size in Figure~\ref{fig:regime1_plot}. Error bars representing two standard deviations are also shown. 
In Figure~\ref{fig:regime1_plot}, both the empirical convergence rates corresponding to the $\relu$ experts and the linear experts are analyzed under the over-specified setting. It can be seen that the use of $\relu$ experts induces a rapid convergence rate of order ${\mathcal{O}}(n^{-0.46})$, while the linear experts lead to a considerably slower rate of order ${\mathcal{O}}(n^{-0.02})$. Those empirical rates match our theoretical results in Theorem~\ref{theorem:over_LSE_regime_1} and Theorem~\ref{theorem:linear_experts}, respectively.
\begin{table}[t!]
    \centering
    \caption{Ground-truth Parameters for Gating and Experts. The variance for the gating parameters is $\nu_g = 0.01/d$, and for the expert parameters is $\nu_e = 1/d$.}
    \begin{tabular}{l l l l}
    \toprule
    \multicolumn{2}{c}{\textbf{Gating parameters}} & \multicolumn{2}{c}{\textbf{Expert parameters}}\\
    \midrule
    $\beta_{1i}^{*} = \zerod$
    & $\beta_{0i}^{*} \sim \mathcal{N}(0, \nu_g)$ &
    $a_{i}^{*} \sim \mathcal{N}(\zerod, \nu_{e} I_d)$ & $b_{i}^{*} \sim \mathcal{N}(0, \nu_{e})$
    \\
    \bottomrule
    \end{tabular}
    \label{tab:true_params_1}
\end{table}

\begin{table}[t!]
    \centering
    \caption{True Parameters for Gating and Experts. The variance for the gating parameters is $\nu_g = 0.01/d$ and for the expert parameters is $\nu_e = 1/d$.}
    \label{tab:true_params_2}
    \begin{tabular}{l l l l}
    \toprule
    \multicolumn{2}{c}{\textbf{Gating parameters}} & \multicolumn{2}{c}{\textbf{Expert parameters}}\\
    \midrule
    $\begin{cases}
        \beta_{1i}^{*} \sim \mathcal{N}(\zerod, \nu_g I_{d}) & 1 \le i \le 7\\
        \beta_{1i}^{*} = \zerod & i = 8
    \end{cases}$
    & $\beta_{0i}^{*} \sim \mathcal{N}(0, \nu_g)$ & 
    $a_{i}^{*} \sim \mathcal{N}(\zerod, \nu_e I_d)$ & $b_{i}^{*} \sim \mathcal{N}(0, \nu_e)$
    \\
    \bottomrule
    \end{tabular}
\end{table}

\textbf{Regime 2.} In the experiments for the Regime 2, we apply the same experimental setup as in those for the Regime 1 except for the generation of true parameters $\beta^*_{1i}$. More specifically, $\beta^*_{1i}$ are drawn independently from an isotropic Gaussian distribution $\mathcal{N}(\zerod, \nu_g I_{d})$ for $1\leq i\leq 8$, where $\nu_g=0.01/d$, while we set $\beta^*_{1i}=\zerod$ for $i=8$ (see also Table~\ref{tab:true_params_2}).

\textbf{Results.} It can be observed from Figure~\ref{fig:regime2_plot} that the uses of $\relu$ experts and linear experts both lead to fast empirical convergence rates of orders ${\mathcal{O}}(n^{-0.48})$ and ${\mathcal{O}}(n^{-0.42})$, respectively. Those empirical rates align with our theoretical findings in Theorem~\ref{theorem:exact_LSE}.

\begin{figure}[t!]
    \centering
    \begin{subfigure}{.4\textwidth}
        \centering
        \includegraphics[scale = .4]{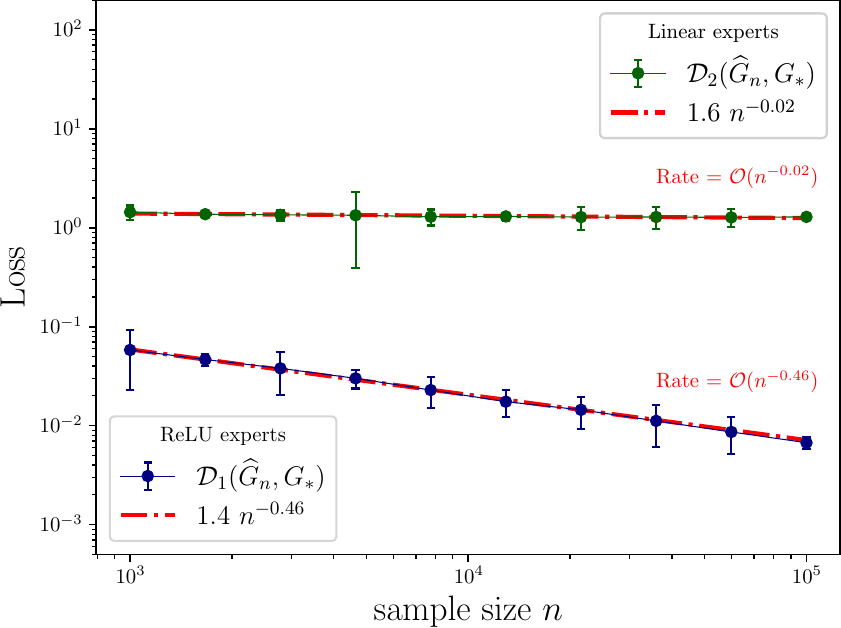}
        \caption{Regime 1}
        \label{fig:regime1_plot}
    \end{subfigure}
    \hspace{0.6cm}
    \begin{subfigure}{.4\textwidth}
        \centering
        \includegraphics[scale = .4]{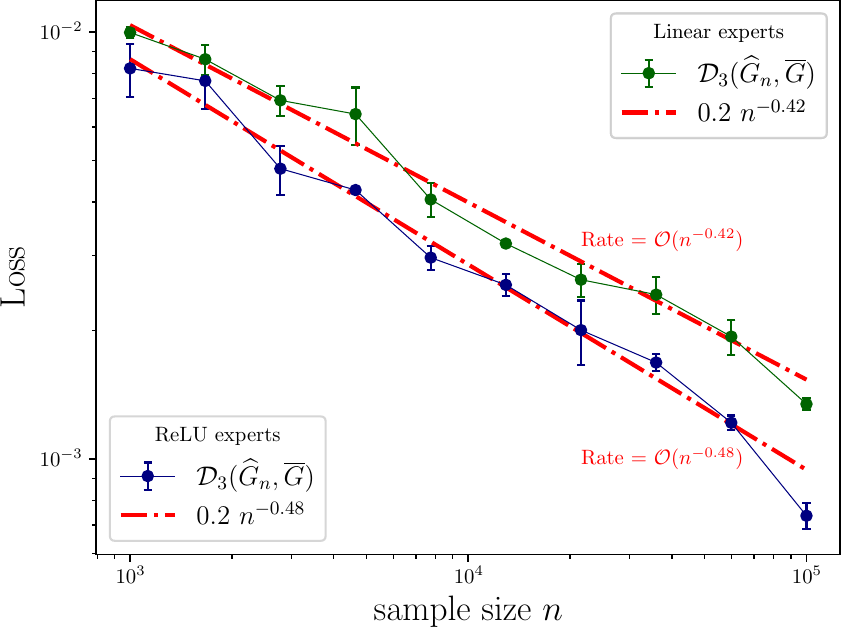}
        \caption{Regime 2}	
        \label{fig:regime2_plot}
    \end{subfigure}
    \caption{Logarithmic plots of empirical convergence rates. Figures~\ref{fig:regime1_plot} and \ref{fig:regime2_plot} illustrate the empirical averages of the corresponding Voronoi losses under the Regime 1 and the Regime 2, respectively.
    The blue lines depict the Voronoi loss associated with the $\relu$ experts, while the green lines correspond to that of the linear experts. The red dash-dotted lines are used to illustrate the fitted lines for determining the empirical convergence rates. See Appendix~\ref{appendix:experiments} for the experimental details.}  
    \label{fig:emp_rates}
\end{figure}
\newpage

\end{document}